\def\eqref#1{equation~\ref{#1}}
\def\1{\bm{1}}
\def\vz{{\bm{z}}}
\DeclareMathAlphabet{\mathsfit}{\encodingdefault}{\sfdefault}{m}{sl}
\SetMathAlphabet{\mathsfit}{bold}{\encodingdefault}{\sfdefault}{bx}{n}
\newtcolorbox{promptbox}{
  colback=gray!4!white,      
  colframe=gray!70!black,    
  colbacktitle=gray!85!black,
  coltitle=white,            
  fonttitle=\bfseries,
  title=\textbf{Prompt for generating thinking of jigsaw tasks.}
}
\newtcolorbox{promptmathbox}{
  colback=gray!4!white,      
  colframe=gray!70!black,    
  colbacktitle=gray!85!black,
  coltitle=white,            
  fonttitle=\bfseries,
  title=\textbf{Prompt for generating thinking of math tasks.}
}
\newtcolorbox{promptmcqbox}{
  enhanced,
  breakable=false,
  colback=gray!4!white,      
  colframe=gray!70!black,    
  colbacktitle=gray!85!black,
  coltitle=white,            
  fonttitle=\bfseries,
  title=\textbf{Prompt for generating thinking of Sci-MCQ4 tasks.}
}
\newtcolorbox{respbox}{
  enhanced,
  breakable=false,
  colback=teal!2!white,     
  colframe=teal!80!black, 
  coltitle=white,
  fonttitle=\bfseries,
  title=\textbf{Example model response.}
}
\newcommand{\triangleq}{\stackrel{\scriptscriptstyle\Delta}{=}}
\newcommand{\tp}{^\mathsf{T}}
\definecolor{cgpt}{HTML}{00E079}
\definecolor{cclaude}{HTML}{9B4400}
\DeclareMathOperator{\bigO}{\mathcal{O}}
\let\triangleq\relax
\DeclareMathOperator{\Softmax}{\mathsf{Softmax}}
\newcommand{\up}[1]{\textcolor{green!60!black}{($\uparrow$#1)}}   
\newcommand{\down}[1]{\textcolor{red}{($\downarrow$#1)}}          
\newcommand{\bdown}[1]{%
  \textcolor{red!85!black}{($\bm{\downarrow}$#1)}%
}
\title{Why Reinforcement Fine-Tuning Preserves Prior Knowledge Better: A Data Perspective}
\author{\textbf{Zhihao Zhang}$^{1,2}\thanks{{ }{ }Equal contributions.} \: \: $,
\ \textbf{Qiaole Dong}$^{1\,*}$,
\ \textbf{Qi Zhang}$^{1,2,3}\thanks{{ }{ }Corresponding author.}\:  \:$, 
\\
\textbf{Enyu Zhou}$^{1}$,
\ \textbf{Jun Zhao}$^{1}$,
\ \textbf{Zhiheng Xi}$^{1}$,
\ \textbf{Senjie Jin}$^{1}$, 
\ \textbf{Xiaoran Fan}$^{1}$,
\ \textbf{Yuhao Zhou}$^{1}$, 
\\
\textbf{Mingqi Wu}$^{1}$,
\ \textbf{Yanwei Fu}$^{1}$,
\ \textbf{Tao Ji}$^{1}$,
\ \textbf{Tao Gui}$^{1,3}$,
\ \textbf{Xuanjing Huang}$^{1,3\,\dagger}$,
\ \textbf{Kai Chen}$^{2\,\dagger}$
\\
$^{1}$ Fudan University\quad$^{2}$ Shanghai Artificial Intelligence Laboratory\\ $^{3}$ Shanghai Collaborative Innovation Center of Intelligent Visual Computing\\ 
\texttt{\{zhangzhihao19, qldong18, qz\}@fudan.edu.cn}
}
\begin{document}

\maketitle


\begin{abstract}
Post-training algorithms such as Supervised Fine-Tuning (SFT) and Reinforcement Fine-Tuning (RFT) are widely used to adapt (multimodal) large language models to downstream tasks. While effective at task adaptation, their impact on retaining prior knowledge remains unclear. In this paper, we introduce jigsaw puzzles as a novel task absent from existing pretraining corpora and systematically study the behavior of SFT and RFT on the open-source Qwen2.5-VL series. Our experiments reveal a sharp trade-off: SFT enables rapid task acquisition but leads to catastrophic forgetting, whereas RFT learns more slowly but better maintains prior knowledge. We study this phenomenon through learning dynamics by examining both the magnitude and direction of how training data influence prior knowledge. Our analysis shows that RFT mainly reinforces correct samples naturally aligned with the base model’s probability landscape, leading to weaker interference with prior knowledge. Moreover, training on RFT-simulated rollouts, which exert a smaller magnitude of influence and are better aligned in direction to prior knowledge, allows SFT to preserve prior knowledge better while rapidly learning new tasks. We further validate our framework on Qwen2.5 post-training in math and scientific QA, observing consistent forgetting and learning-dynamics trends. These findings suggest that the distribution of post-training data, rather than algorithmic differences alone, plays a central role in forgetting, and highlight RFT as a promising ingredient for stable continual post-training.
\end{abstract}

\section{Introduction}
In the era of large models, two primary post-training methods, \emph{i.e.}, Supervised Fine-Tuning (SFT)~\citep{wei2021finetuned} and Reinforcement Fine-Tuning (RFT)~\citep{DeepSeek-R1,ouyang2022training}, have emerged for enhancing model performance on domain-specific tasks. These methods have been pivotal in enabling (multimodal) large language models to learn specific downstream tasks, follow human instructions, and acquire reasoning capabilities, yielding impressive results. However, current practices primarily focus on performance improvement for specific downstream tasks, while overlooking potential impact of fine-tuning algorithms on model's pre-existing knowledge. This oversight raises concerns about the model's ability to retain and apply prior knowledge.

To this end, this paper investigates how post-training algorithms, specifically SFT and RFT, affect the retention of prior knowledge when large models are trained to learn entirely novel knowledge or tasks that were absent during pretraining. 
To establish a challenging and genuinely novel task for testing, we introduce jigsaw puzzles as the target task for learning, as in Fig.~\ref{jigsaw-intro}. Through preliminary experiments, we observe that existing state-of-the-art MLLMs, including GPT-4o~\citep{hurst2024gpt}, fail to solve even simple 3x3 jigsaw puzzles, indicating that this task represents a novel problem not covered by current pretraining corpora. Thus, jigsaw puzzles can serve as a fair and meaningful task for evaluating impact of post-training algorithms on prior knowledge.

We conduct systematic fine-tuning experiments using both standard SFT and RFT, \emph{i.e.}, GRPO~\citep{shao2024deepseekmath}, on open-sourced Qwen2.5-VL~\citep{Qwen2.5-VL} series. Interestingly, we find that SFT can master novel tasks with solely hundreds or thousands of training steps, while RFT requires several tens of thousands of training steps to successfully solve jigsaw puzzles and achieves similar accuracy as SFT. \textcolor{black}{This finding suggests that large-scale RFT can teach model to solve tasks that base model is completely unable to handle. In a sense, this shows that RFT can push the model beyond its original capability boundary.}
In addition to performance on novel tasks, we observe that SFT incurs severe forgetting of previous knowledge, with substantial performance drops across diverse benchmarks, especially on tasks with similar output formats as jigsaw puzzles. In contrast, RFT, by leveraging reward-driven credit assignments for the simulated rollouts, can master the novel jigsaw puzzles while maintaining decent performance on prior tasks.

\begin{figure}
  \centering
  \includegraphics[width=0.8\linewidth]{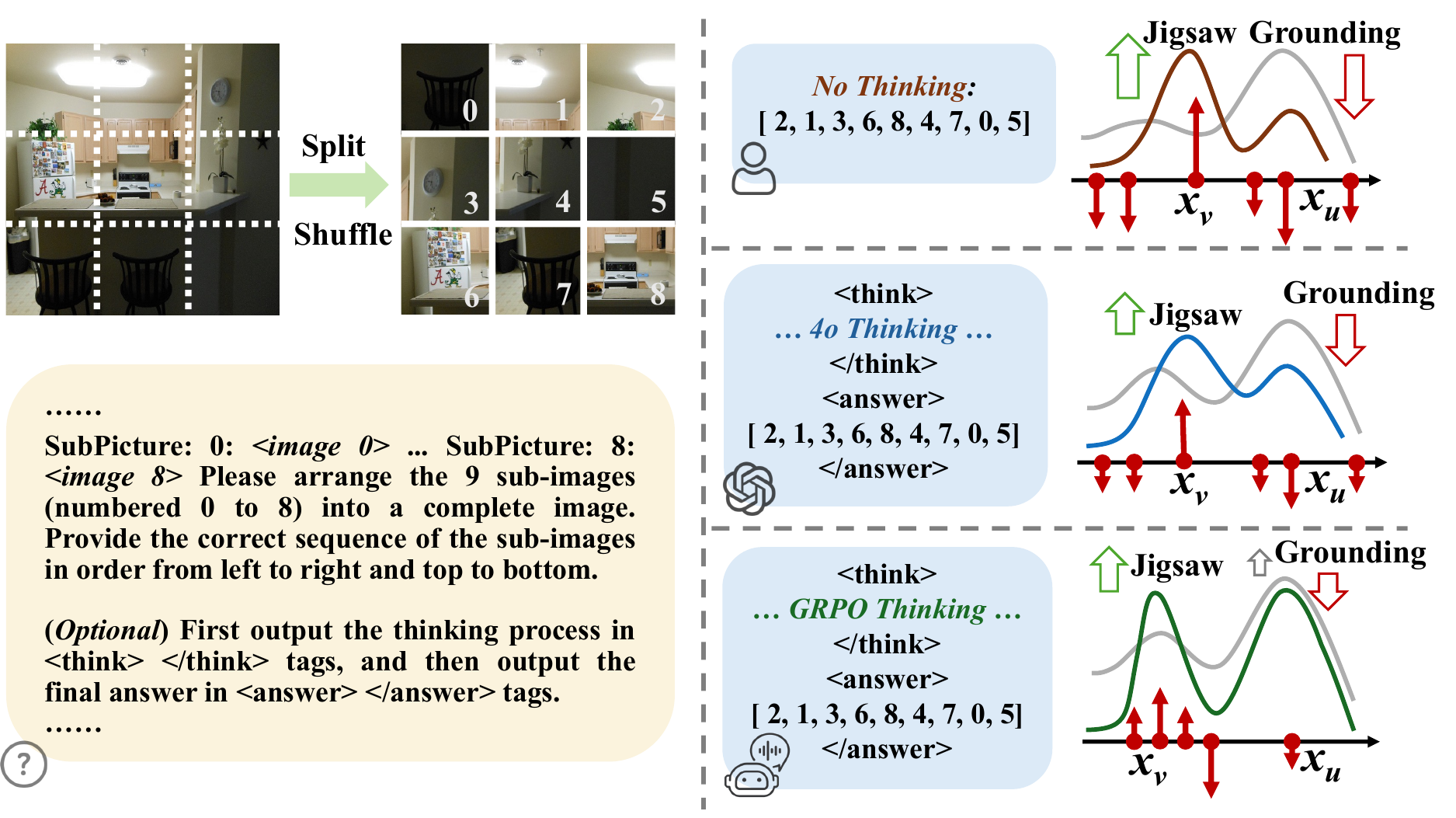}
  \caption{Overview of jigsaw puzzles in the context of MLLMs. We split the original image into 9 patches and randomly shuffle the order of the patches. During SFT, MLLMs are supervised either with Non-Reasoning data directly or GPT-4o-generated reasoning trajectories, while both incur catastrophic forgetting. In contrast, RFT generates reasoning trajectories and answers by itself, reinforces the correct outputs, and avoids severe forgetting.}
  \label{jigsaw-intro}
\vspace{-0.35in}
\end{figure}

The distinct phenomenon between SFT and RFT naturally raises a question: \emph{Why SFT incurs catastrophic forgetting while RFT does not?} While both algorithms increase likelihood of correct responses, RFT adaptively reweights likelihood of rollout with reward, whereas SFT uniformly increases likelihood of static human annotations. Inspired by this, we collect correct rollouts during RFT and use them as supervised data for SFT. Surprisingly, SFT trained on these correct rollouts not only acquires novel knowledge quickly but also preserves prior knowledge better. This suggests that construction of fine-tuning data, rather than training algorithm, is key factor in knowledge forgetting.

Furthermore, we provide a new perspective based on learning dynamics~\citep{ren2024learning}, which links the likelihood change of prior knowledge $x_v$ to the gradient induced by an individual training example $x_u$, on understanding this distinct forgetting behavior by analyzing the \textit{\textbf{magnitude}} and \textit{\textbf{direction}} of how training data influence prior knowledge. We first observe that SFT data without reasoning trajectories usually interfere more with prior knowledge, as verified with a much larger norm of empirical neural tangent kernel (eNTK) between SFT data and prior knowledge. While datasets with reasoning trajectories, such as reasoning trajectories generated by GPT-4o and collected during RFT, usually exhibit a smaller norm of eNTK and less forgetting of prior knowledge, implying that introducing explicit reasoning can help alleviate knowledge forgetting.

As for the direction of interference, we find that data with reasoning trajectories generated by GPT-4o typically belong to high-perplexity regions of the base model. In contrast, data collected during RFT are naturally generated from regions where the base model already assigns a moderate probability. This suggests that pretraining has already shaped certain linguistic regions by accident that are well-aligned with novel tasks, while remaining compatible with prior knowledge. Importantly, according to learning dynamics, the influence of training on one example $x_u$ over the likelihood of another example $x_v$ is symmetric: increasing the likelihood of $x_u$ has the same marginal effect on $x_v$ as increasing $x_v$ has on $x_u$. Therefore, when RFT discovers and reinforces such hidden linguistic regions $x_u$ shaped during pre-training, it degrades less the likelihood of prior knowledge $x_v$. Crucially, such regions are difficult to identify during the stage of dataset construction for SFT, but are accessible through RFT’s active exploration within linguistic space. This highlights RFT as an effective algorithm for stable novel knowledge acquisition without suffering from catastrophic forgetting. Further, more experiments on Qwen2.5 post-training in math and scientific QA show consistent forgetting and learning-dynamics results. Formally, our contributions are three-fold:
\begin{itemize}[leftmargin=1.5em,itemsep=2pt,topsep=2pt]
    \item We show that RFT can solve unseen tasks while preserving prior competencies. Moreover, SFT trained on RFT-generated rollouts can match RFT’s performance while markedly reducing catastrophic forgetting, underscoring the central role of data construction in post-training.
    \item We propose a learning-dynamics interpretation of forgetting that decomposes how training data influence prior knowledge into its \textit{magnitude} and \textit{direction}, providing a principled view of interference and informing fine-tuning design.
    \item Building on this interpretation, we conduct extensive experiments demonstrating that RL-sampled corpora strike a favorable magnitude--direction trade-off, offering strong empirical support for the stability of RL algorithms.
\end{itemize}

\section{Related Works}
\textbf{Jigsaw Puzzles.}
Jigsaw puzzles has long been a popular self-supervised task in the computer vision community, aimed at learning visual representations~\citep{Unsupervised-Learning-of-Visual-Representations-by-Solving-Jigsaw, Domain-Generalization-by-Solving-Jigsaw-Puzzles} by spatial reasoning and part-whole understanding. 
Recently, this task has been repurposed for probing weak spot of MLLMs:
\citet{Jigsaw-Puzzles} shows that leading MLLMs perform far behind than human performance. The contemporary work Jigsaw-R1~\citep{Jigsaw-R1} solves jigsaw puzzles with RFT, achieving much better performance. 
Collectively, these works mainly treat jigsaw puzzles as pretext task for representation learning or test benchmark for MLLMs. However, we employ jigsaw puzzles to investigate how post-training algorithms affect the forgetting behavior of MLLMs.

\textbf{Reinforcement Fine-Tuning in MLLMs.}
Inspired by the success of RFT in large language models~\citep{DeepSeek-R1, ouyang2022training},
recent work has applied RFT to MLLMs. Among them, \citet{MM-Eureka} 
finds that RFT can achieve better out-of-distribution generalization performance than SFT.
Meanwhile, RFT is also employed for perception-centric tasks~\citep{liu2025visual, VLM-R1, Seg-Zero}, 
still conferring notable gains in generalization and robustness. 
Concurrently, Jigsaw-R1~\citep{Jigsaw-R1} introduced RFT to the novel task of jigsaw puzzles but achieved limited accuracy. Building on this direction, we extend RFT training to tens of thousands of steps to enable deeper exploration, yielding substantial gains on jigsaw puzzles. 



\textbf{Catastrophic Forgetting.}
Early work \citep{mccloskey1989catastrophic, ratcliff1990connectionist} showed that even minimal sequential training on disjoint data can cause rapid ``catastrophic forgetting’’ (CF).
Existing strategies to mitigate CF fall into three categories: (i) \textcolor{black}{Regularization-based methods~\citep{Overcoming-catastrophic-forgetting-in-neural-networks, Continual-Learning-Through-Synaptic-Intelligence, li2017learning}} constrain updates to protect old tasks but often limit new learning. (ii) \textcolor{black}{Memory-replay strategies~\citep{Deep-Generative-Replay,iCaRL, chaudhry2019continual}} interleaves past and current data, yet pretraining data of modern open-source MLLMs is usually unavailable for post-training.  
(iii) Architecture-based techniques~\citep{Progressive-Neural-Networks, Hard-Attention-to-the-Task} assign task-specific modules, but their parameter overhead makes them impractical for large MLLMs. 
Fortunately, with the rise of RFT algorithms such as GRPO~\citep{shao2024deepseekmath}, recent studies~\citep{Seg-Zero, Jigsaw-R1,lai2025reinforcement} have shown that RFT can significantly reduce CF in MLLMs, although their analysis and explanations are still limited. Recently, RL’s Razor \citep{Razor} argues that online RL mitigates CF because it is implicitly biased toward KL-minimal solutions. Aligning with this discovery, we further analyze why online sampling distribution inherently reduces forgetting from a data-centric perspective with learning dynamics theory. \textcolor{black}{Besides, we would like to clarify that our paper isn't aim to design a better algorithm than classical methods for continuous learning.}


\section{Definition and Background}
This section details the format of jigsaw puzzles tailored for MLLMs. For Reinforcement Fine-Tuning (RFT), we propose several rule-based rewards to learn jigsaw puzzles.

\subsection{Puzzles Generation}
\textbf{Image Slicing.}
Puzzle creation begins with a source image, which is divided into an $m \!\times\! n$ regular grid; adjusting $m$ and $n$ directly controls the difficulty of task. If the image height is not divisible by $m$ or the width by $n$, the excess pixels are cropped from the bottom or right edge, respectively, so that the resulting pixels are exact multiples of the grid cell size. The aligned grid is then used to slice the image into $m \!\times\! n$ patches, whose order is randomly permuted to produce the puzzle instance.

\textbf{Index Assignment and Objective.}
To uniquely identify each patch’s original position, we assign row-major indices from~0 (top-left) to~$m \times n-1$ (bottom-right). The model receives this permuted sequence of patches as input and must output the indices in canonical top-left to bottom-right order, thereby reconstructing the image. In this study, we adopt a $3 \!\times\! 3$ configuration. Empirical results show that state-of-the-art multimodal large language models perform at the chance level on this task.

\subsection{Rule-based Rewards and RFT}
The objective of RFT is driven by a rule-based reward $R$ that comprises three components: the \emph{hit reward} $R_{\mathrm{hit}}$, \emph{accuracy reward} $R_{\mathrm{acc}}$, and \emph{format reward} $R_{\mathrm{fmt}}$. 

\textbf{Hit Reward.}
This term measures partial correctness by computing the fraction of position indices that are predicted accurately:
\[
  R_{\mathrm{hit}} \;=\; \frac{\#\,\text{correct indices}}{m \!\times\! n} \;\in\; [0,1].
\]

\textbf{Accuracy Reward.}
A binary bonus that evaluates whether the entire configuration is correct. The model receives $R_{\mathrm{acc}} = 1$ only when every index is perfectly placed; otherwise $R_{\mathrm{acc}} = 0$.

\textbf{Format Reward.}
The output must satisfy formatting rules: the reasoning process is wrapped in \texttt{<think>}~\dots~\texttt{</think>} tags and the final answer in \texttt{<answer>}~\dots~\texttt{</answer>} tags, with each tag appearing exactly once and in the correct order. The final answer must be a non-repeating sequence of digits $0$–$8$ within `[]'.  
If all requirements are met, $R_{\mathrm{fmt}} = 1$; otherwise, $R_{\mathrm{fmt}} = 0$.

\textbf{RFT Algorithm.} We adopt Group Relative Policy Optimization (GRPO)~\citep{shao2024deepseekmath} as our RFT algorithm. Formally, we maximize the following objective:
\begin{equation}\label{grpo_loss}
    \mathcal{J}_{\text{GRPO}}(\theta) = \mathbf{E}_{q,\{o_i\}_{i=1}^G\sim\pi_{\theta_{\text{old}}}(\cdot|q)}\frac{1}{G}\sum_{i=1}^G\frac{1}{|o_i|}\sum_{t=1}^{|o_i|}\left[\frac{\pi_{\theta}(o_{i,t}|q,o_{i,<t})}{\pi_{\theta_{\text{old}}}(o_{i,t}|q,o_{i,<t})}A_{i,t}-\beta\mathbf{D}_{\text{KL}}\left(\pi_{\theta}||\pi_{\text{ref}}\right)\right],
\end{equation}
where $q$ is the problem, \textcolor{black}{$\mathbf{r}=\{r_1,\cdots,r_G\}$ is reward for model outputs $\{o_1,\cdots,o_G\}$, $A_{i,t}=(r_i-\text{mean}(\mathbf{r}))/\text{std}(\mathbf{r})$ is the advantage for each token}. Besides, $\pi_{\theta_{\text{old}}}(\cdot) = \pi_{\theta}(\cdot)$ in our experiments, so we omit the original \textit{clip} term here for simplicity.

\section{Experimental Setup}
\label{sec:experiments}
\textbf{Dataset Construction.}
Jigsaw puzzles are built upon COCO 2014~\citep{COCO} image dataset.
For training, we sample around 22k images from the COCO training set to generate jigsaw puzzles. For testing, we sample ${100}$ images from the COCO test set. For the SFT dataset, we provide two data formats, \emph{i.e.}, Non-Reasoning data and Reasoning data: the first directly provides the ground-truth answers without reasoning processes, while the second additionally consists of reasoning trajectories generated by GPT-4o with both the question and the answer as input, dubbed as Rea-4o-Rollout.

\textbf{MLLMs.} We employ Qwen2.5-VL-3B~\citep{Qwen2.5-VL} and Qwen2.5-VL-7B as our MLLMs due to their strong performance on vision-language understanding and support of native resolution input. 

\textbf{Evaluation.}
We not only evaluate the post-trained model on novel tasks, \emph{i.e.}, jigsaw puzzles, but also on $5$ representative capability axes of prior knowledge:
\begin{itemize}[leftmargin=1em,itemsep=2pt,topsep=2pt]
    \item \textbf{Grounding.} 
    RefCOCO/+/g \citep{RefCOCO, RefCOCOg} test referring-expression comprehension, requiring the model to localize objects described by the free-form text.
    \item \textbf{OCR, chart \& document understanding.} DocVQA \citep{DocVQA}, InfoVQA \citep{InfoVQA}, and OCRBench \citep{OCRBench} probe the ability of MLLMs to read and reason over scanned documents, forms, and scientific plots.
    \item \textbf{General VQA.} MME \citep{MME}, MMStar \citep{MMStar}, and GQA \citep{GQA} cover visual reasoning, spatial relations, and multimodal commonsense.
    \item \textbf{Hallucination.} POPE \citep{POPE} measures tendency of generation not grounded in image.
    \item \textbf{College-level Problems.} MMMU \citep{Mmmu} is a college-level multimodal benchmark spanning six disciplines for knowledge-grounded visual reasoning.
\end{itemize}

\textbf{Hyper-parameter setup.}
All experiments are conducted on 4 $\times$ NVIDIA-A800 80GB GPUs.
For GRPO tuning on the jigsaw puzzles, we set the number of generations $G{=}4$ per prompt with sampling temperature of 1.0, batch size 4, learning rate $1\times10^{-6}$, and KL divergence penalty coefficient $\beta{=}0.04$. For SFT training, we set the batch size to 16 with a learning rate of $1\times10^{-5}$ by default.



\section{Results and Analysis}
\label{sec:analysis}
\subsection{Can RFT master the novel jigsaw puzzles?}
\begin{table}
  \centering
  \scriptsize
  \setlength{\tabcolsep}{3pt}
  \caption{Performance comparison across post-trained models of \textbf{Qwen2.5-VL-3B} and \textbf{Qwen2.5-VL-7B}.
  Numbers in parentheses denote the change w.r.t.\ \emph{each scale's} base model.}
  \vspace{-0.1in}
  \label{tab:main_results_transposed}
  \resizebox{\linewidth}{!}{
  \begin{tabular}{l ccccc ccccc}
    \toprule
    & \multicolumn{5}{c}{\textbf{Qwen2.5-VL-3B}} & \multicolumn{5}{c}{\textbf{Qwen2.5-VL-7B}} \\
    \cmidrule(lr){2-6}\cmidrule(lr){7-11}
    & \textbf{Base}
    & \textbf{RFT}
    & \textbf{SFT-Non-Rea}
    & \makecell[c]{\textbf{SFT-Rea-}\\\textbf{4o-Rollout}}
    & \makecell[c]{\textbf{SFT-Rea-}\\\textbf{GRPO-Rollout}}
    & \textbf{Base}
    & \textbf{RFT}
    & \textbf{SFT-Non-Rea}
    & \makecell[c]{\textbf{SFT-Rea-}\\\textbf{4o-Rollout}}
    & \makecell[c]{\textbf{SFT-Rea-}\\\textbf{GRPO-Rollout}} \\
    \midrule
    \multicolumn{11}{c}{\textbf{\textit{Jigsaw Puzzles (test)}}} \\ \midrule
    \textit{Training steps}
      & -- & 27,360 & 200 & 4,100 & 2,670
      & -- & 27,360 & 400 & 4,100 & 3,000 \\
    3$\times$3 puzzles
      & 0.0  & 66 \up{66} & 53.0 \up{53} & 70.0 \up{70} & 70.0 \up{70}
      & 0.0   & 75 \up{75} & 80 \up{80} & 78 \up{78} & 81 \up{81} \\
    \midrule
    \multicolumn{11}{c}{\textbf{\textit{Grounding}}} \\ \midrule
    RefCOCO$_{\text{val}}$
      & 88.8 & 88.4 \down{0.4} & 6.1 \bdown{\textbf{82.8}} & 74.2 \bdown{\textbf{14.6}} & 84.6 \down{4.2}
      & 90.0    & 89.4 \down{0.6}    & 32.9 \bdown{\textbf{57.2}}  & 52.5 \bdown{\textbf{37.5}} & 81.4 \down{8.6} \\
    RefCOCO+$_{\text{val}}$
      & 82.0 & 82.2 \up{0.2} & 4.2 \bdown{\textbf{77.7}} & 68.3 \bdown{\textbf{13.6}} & 77.6 \down{4.4}
      & 84.7    & 83.6 \down{1.1} & 28.8 \bdown{\textbf{55.9}} & 47.8 \bdown{\textbf{36.9}} & 75.1 \down{9.6} \\
    RefCOCOg$_{\text{val}}$
      & 86.0 & 84.1 \down{1.9} & 5.6 \bdown{\textbf{80.4}} & 71.3 \bdown{\textbf{14.7}} & 80.8 \down{5.2}
      & 86.4    & 86.3 \down{0.1} & 30.1 \bdown{\textbf{56.4}} & 48.3 \bdown{\textbf{38.1}} & 76.1 \down{10.3} \\
    \midrule
    \multicolumn{11}{c}{\textbf{\textit{Document \& OCR}}} \\ \midrule
    DocVQA$_{\text{test}}$
      & 92.8 & 91.5 \down{1.3} & 81.6 \bdown{\textbf{11.3}} & 90.3 \down{2.5} & 89.8 \down{3.1}
      & 94.4    & 94.4 \up{0.0} & 67.1 \bdown{\textbf{27.4}} & 92.1 \down{2.3} & 93.5 \down{0.9} \\
    InfoVQA$_{\text{test}}$
      & 74.3 & 73.1 \down{1.2} & 62.6 \bdown{\textbf{11.7}} & 71.4 \down{2.8} & 70.7 \down{3.6}
      & 80.1    & 79.1 \down{1.0} & 44.6 \bdown{\textbf{35.5}} & 75.6 \down{4.5} & 77.3 \down{2.8} \\
    OCRBench
      & 79.3 & 77.1 \down{2.1} & 65.9 \bdown{\textbf{13.4}} & 69.4 \bdown{\textbf{9.9}} & 74.9 \down{4.4}
      & 83.4    & 83.4 \up{0.0} & 51.7 \bdown{\textbf{31.7}} & 80.5 \down{2.9} & 81.4 \down{2.0} \\
    \midrule
    \multicolumn{11}{c}{\textbf{\textit{General VQA}}} \\ \midrule
    MME$_{\text{sum}}$
      & 2140 & 2137 \down{3} & 1631 \bdown{\textbf{509}} & 1478 \bdown{\textbf{662}} & 2132 \down{8}
      & 2333  & 2325 \down{8} & 479\bdown{\textbf{1854}} & 2084 \down{249} & 2207 \down{126} \\
    MMStar
      & 56.2 & 55.8 \down{0.5} & 49.2 \down{7.0} & 51.7 \down{4.5} & 52.2 \down{4.0}
      & 62.8    & 64.4 \up{1.7} & 0.0 \bdown{\textbf{62.8}} & 59.1 \down{3.7} & 60.4 \down{2.4} \\
    GQA
      & 60.1 & 59.5 \down{0.6} & 54.7 \down{5.4} & 50.0 \bdown{\textbf{10.1}} & 54.0 \down{6.1}
      & 60.4    & 60.3 \down{0.1} & 21.7 \bdown{\textbf{38.7}} & 53.5 \bdown{\textbf{6.9}} & 57.0 \down{3.3} \\
    \midrule
    \multicolumn{11}{c}{\textbf{\textit{Hallucination}}} \\ \midrule
    POPE
      & 86.9 & 86.5 \down{0.3} & 85.9 \down{1.0} & 69.4 \bdown{\textbf{17.5}} & 85.4 \down{1.4}
      & 86.2    & 86.0 \down{0.2} & 16.3 \bdown{\textbf{69.9}} & 74.1 \bdown{\textbf{12.1}} & 83.1 \down{3.1} \\
    \midrule
    \multicolumn{11}{c}{\textbf{\textit{College-level Problems}}} \\ \midrule
    MMMU$_{\text{val}}$
      & 46.9 & 46.3 \down{0.6} & 43.4 \down{3.5} & 43.0 \down{3.9} & 44.3 \down{2.6}
      & 51.3    & 50.0 \down{1.3} & 22.4 \bdown{\textbf{28.9}} & 46.1 \bdown{\textbf{5.2}} & 48.8 \down{2.6} \\
    \bottomrule
  \end{tabular}}
\end{table}

We first test current state-of-the-art multimodal large language models (MLLMs) on our test set of jigsaw puzzles in a zero-shot manner. We find that both GPT-4o and Qwen-2.5-VL-72B obtain an accuracy of 0.0, and their hit rate of correct position indices is close to random chance ${1} / {9}$, indicating that jigsaw puzzles are indeed a novel task for these models and are suitable for our research on forgetting during learning of new tasks.

We then examine whether reinforcement fine-tuning can enable the base model to learn entirely new tasks or knowledge from scratch. Specifically, we apply GRPO to Qwen-2.5-VL-3B/7B on the training set of jigsaw puzzles for 10 epochs, encouraging a comprehensive and sufficient exploration of the novel task. After convergence, the final models achieve an accuracy of 66\%/75\% on the held-out test set as shown in Tab.~\ref{tab:main_results_transposed}, dramatically outperforming the base model. Qualitative results on test examples, as in Fig.~\ref{jigsaw-qualitative} of the Appendix, show that the model learns to generate meaningful reasoning processes before giving the final answers. Although prior work~\citep{yue2025does} suggests that RFT fails to induce fundamentally new reasoning patterns in base models, we show that with sufficiently long-term exploration, RFT can in fact enable the model to solve novel jigsaw puzzles from scratch.

\subsection{Forgetting of Prior Knowledge: RFT vs. SFT}

We also fine-tune Qwen-2.5-VL-3B/7B on the jigsaw puzzle training set using the standard SFT approach with Non-Rea and Rea-4o-Rollout dataset. As shown in Table~\ref{tab:main_results_transposed}, due to the property of teacher forcing, the model quickly picks up task-specific patterns under SFT, achieving performance comparable to RFT after just one epoch.
To assess the impact of SFT and RFT on previously learned knowledge, we further evaluate both models on a set of prior benchmarks in Tab.~\ref{tab:main_results_transposed}. While SFT achieves high accuracy with much less training time, it leads to significantly more catastrophic forgetting than RFT, even though it is trained for many fewer steps. This forgetting is particularly evident on the Grounding, Document \& OCR, and General VQA. Besides, SFT on Non-Rea data incurs much more forgetting than on Rea-4o-Rollout data across several prior benchmarks.




\subsection{Why does RFT avoid catastrophic forgetting?}\label{sec:rollout_sft}
We start by analyzing the loss function of RFT and SFT. By carefully comparing the gradients of the RFT and SFT losses (Eq.~\ref{gradient_grpo} and Eq.~\ref{gradient_sft}, derivation can be found in Appendix~\ref{sec:connection}), we find that both losses optimize the model's likelihood. However, the difference lies in the fact that RFT optimizes on the dataset sampled by the model and uses adaptive weights for the likelihood objective, while SFT uniformly improves the model's likelihood on a pre-constructed dataset.

Therefore, we investigate whether the corpus sampled from the model itself enables the base model to learn jigsaw puzzles while retaining its performance on previous tasks with SFT. Specifically, we employ the GRPO-trained model to generate responses on the training split of jigsaw puzzles, and filter the responses based on the correctness of the answer, leaving about 65\% of training samples. We then use this filtered corpus to fine-tune the base model under the SFT paradigm. To rule out confounding factors, we adopt exactly the same hyperparameters as in previous SFT experiments.

As shown in Tab.~\ref{tab:main_results_transposed}, fine-tuning on model-generated data (SFT-Rea-GRPO-Rollout) achieves similar accuracy on jigsaw puzzles, while forgetting much less than SFT-Non-Rea and SFT-Rea-4o-Rollout across most benchmarks. Interestingly, we also find that training on Rea-GRPO-Rollout and Rea-4o-Rollout learns much more slowly than on Non-Rea data, necessitating more training steps to achieve a comparable performance on jigsaw puzzles. This may be because the long reasoning paths dilute the per-token learning signal. 
Overall, we find that it is not the adaptive weights but the training data that is the key factor why RFT does not suffer from catastrophic forgetting.

\subsection{Learning Dynamics-based analysis of Data Distribution} 
\label{sec:ntk-lbk}
Motivated by the observation that the distinct data distributions in the post-training phase lead to different forgetting behaviors, we take a learning dynamics perspective to investigate and explain this phenomenon. Let's consider the SFT loss on different datasets:
\begin{equation}\label{loss:pretraining}
    \min \mathcal{L}(\theta) = - \mathbf{E}_{(q, o, t)\sim\text{Dataset}}\log \pi_\theta (o_t|q, o_{<t}),
\end{equation}
where $o_t$ is the ground-truth next token, conditioned on the prompt $q$ and previous completion $o_{<t}$.

Following \citet{ren2024learning}, we employ learning dynamics to describe ``how the change in parameter $\theta$ induced by a step of gradient descent on single training example $\textcolor{red}{x_u} \triangleq \{q^u, o_{<t}^u, o_t^u\}$ impacts the probability of another example $\textcolor{blue}{x_v} \triangleq \{q^v, o_{<t}^v, o_t^v\}$''. Here, we treat $\textcolor{red}{x_u}$ as a GRPO-rollout sample or man-made SFT sample, and $\textcolor{blue}{x_v}$ as a sample of prior knowledge. We have
\begin{align}
    &\Delta\theta^t (\textcolor{red}{x_u})\triangleq\theta^{t+1} - \theta^t = \eta \cdot \nabla_\theta \log \pi_{\theta^t} (\textcolor{red}{x_u})= \eta \cdot \nabla_\theta \log \pi_{\theta^t} (o^u_t|q^u, o^u_{<t})\label{eq:gd};
     \\&\Delta \log\pi^t(\textcolor{blue}{x_v})|_{\textcolor{red}{x_u}}\triangleq \log\pi_{\theta^{t+1}}(\textcolor{blue}{x_v})-\log\pi_{\theta^t}(\textcolor{blue}{x_v})\label{eq:def_learning_dynamics}.
\end{align}
And we want to specify the relationship between $\Delta\theta^t (\textcolor{red}{x_u})$ and $\Delta \log\pi^t(\textcolor{blue}{x_v})|_{\textcolor{red}{x_u}}$. 

\begin{restatable}{theorem}{Decomposition}\label{theo:decomp} Let $\pi_{\theta^t}(x)=\Softmax(\vz(x))[o_t]\in[0,1]$, where $\vz(x)=h_{\theta^t}(q, o_{<t})\in \mathbb{R}^{V}$, $V$ is the number of tokens within vocabulary. The one-step learning dynamics has the following format:
    \begin{equation}
        \underbrace{
            \Delta \log\pi^t(\textcolor{blue}{x_v})|_{\textcolor{red}{x_u}}
        }_{1\times 1} 
        = \eta
        \underbrace{
            \mathcal{A}^t(\textcolor{blue}{x_v})
        }_{1\times V}
        \underbrace{
            \mathcal{K}^t(\textcolor{blue}{x_v},\textcolor{red}{x_u})
        }_{V\times V}
        \underbrace{
            \mathcal{G}^t(\textcolor{red}{x_u})
        }_{V\times 1}
            + \bigO(\eta^2),
    \end{equation}
    where $\mathcal{A}^t(\textcolor{blue}{x_v}) =\nabla_{\vz}\log\pi_{\theta^t}(\textcolor{blue}{x_v})$, $\mathcal{K}^t(\textcolor{blue}{x_v},\textcolor{red}{x_u}) = (\nabla_{\theta}\vz(\textcolor{blue}{x_v})|_{\theta^t})(\nabla_{\theta}\vz(\textcolor{red}{x_u})|_{\theta^t})^\top$ is the empirical neural tangent kernel (eNTK) of the logit network $\vz$,
    and $\mathcal{G}^t(\textcolor{red}{x_u})=\nabla_{\vz}\log\pi_{\theta^t}(\textcolor{red}{x_u})$.
\end{restatable}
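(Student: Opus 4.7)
The plan is to combine a first-order Taylor expansion in parameter space with the chain rule through the logit layer $\vz$, so that the per-token objective is decomposed into three factors that depend separately on $\textcolor{blue}{x_v}$ (through the softmax Jacobian), on the pair $(\textcolor{blue}{x_v},\textcolor{red}{x_u})$ (through the feature inner product), and on $\textcolor{red}{x_u}$ (through the softmax Jacobian again). Concretely, I would first write
\[
\log \pi_{\theta'}(\textcolor{blue}{x_v}) - \log \pi_{\theta}(\textcolor{blue}{x_v})
= \nabla_\theta \log \pi_\theta(\textcolor{blue}{x_v})^\top \Delta\theta(\textcolor{red}{x_u}) + \mathcal{O}(\|\Delta\theta(\textcolor{red}{x_u})\|^2),
\]
and then substitute $\Delta\theta(\textcolor{red}{x_u}) = \eta\,\nabla_\theta\log\pi_\theta(\textcolor{red}{x_u})$ from Eq.~\ref{eq:gd}. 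Because $\Delta\theta$ is itself $\mathcal{O}(\eta)$, the remainder collapses to $\mathcal{O}(\eta^2)$, which is exactly the error term claimed in the statement.

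Next, I would apply the chain rule to both gradients via the logit map $\vz=h_\theta(q,o_{<t})$. For either example $x\in\{\textcolor{blue}{x_v},\textcolor{red}{x_u}\}$,
\[
\nabla_\theta \log\pi_\theta(x) = \bigl(\nabla_\theta \vz(x)\bigr)^\top \nabla_{\vz}\log\pi_\theta(x).
\]
Substituting for both $\textcolor{blue}{x_v}$ and $\textcolor{red}{x_u}$, the leading-order term becomes
\[
\eta\,\bigl[\nabla_{\vz}\log\pi_\theta(\textcolor{blue}{x_v})\bigr]^\top
\bigl(\nabla_\theta \vz(\textcolor{blue}{x_v})\bigr)
\bigl(\nabla_\theta \vz(\textcolor{red}{x_u})\bigr)^\top
\nabla_{\vz}\log\pi_\theta(\textcolor{red}{x_u}),
\]
which I would then read off as $\eta\,\mathcal{A}(\textcolor{blue}{x_v})\,\mathcal{K}(\textcolor{blue}{x_v},\textcolor{red}{x_u})\,\mathcal{G}(\textcolor{red}{x_u})$ by matching the definitions of $\mathcal{A}$, the empirical NTK $\mathcal{K}$, and $\mathcal{G}$. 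A short bookkeeping remark then checks the shapes $1{\times}V$, $V{\times}V$, $V{\times}1$.

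\paragraph{Anticipated obstacles.}
The algebra itself is routine chain rule plus Taylor expansion, so the main care is technical rather than conceptual. I expect two points to demand the most attention. First, controlling the remainder: to claim $\mathcal{O}(\eta^2)$ uniformly, I need $\log\pi_\theta(\textcolor{blue}{x_v})$ to be twice continuously differentiable in $\theta$ in a neighborhood of the current iterate, which is standard for a softmax over a smooth logit network and should be stated as an implicit regularity assumption. Second, keeping the transposes and row/column conventions aligned so that $\mathcal{A}$ appears as a $1{\times}V$ row while $\mathcal{G}$ appears as a $V{\times}1$ column; it is easy to drop a transpose when passing from $\nabla_\theta\log\pi_\theta=\bigl(\nabla_\theta\vz\bigr)^\top\nabla_{\vz}\log\pi_\theta$ into the final product, so I would verify dimensions at each step before collecting terms.
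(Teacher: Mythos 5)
Your proposal is correct and takes essentially the same route as the paper's proof: a first-order Taylor expansion of $\log\pi_{\theta'}(\textcolor{blue}{x_v})$ around $\theta$, substitution of the gradient-ascent step $\Delta\theta(\textcolor{red}{x_u})=\eta\,\nabla_\theta\log\pi_\theta(\textcolor{red}{x_u})$, and a chain-rule factorization through the logit map to expose the eNTK as the middle factor. The only cosmetic difference is in justifying the $\bigO(\eta^2)$ remainder: you appeal to twice continuous differentiability of $\log\pi_\theta$, whereas the paper instead invokes gradient clipping to bound $\|\nabla_\theta\log\pi_\theta(\textcolor{red}{x_u})\|$; both serve the same purpose of keeping the second-order term controlled.
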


Proof of the theorem and more discussion can be found in Appendix~\ref{sec:learning_dynamics}. \textcolor{black}{The theorem shows that the effect of $\Delta\theta^t(\textcolor{red}{x_u})$ on $\Delta\log\pi^t(\textcolor{blue}{x_v})|_{\textcolor{red}{x_u}}$ is mainly determined by three factors:
(1) the model’s sensitivity to the old and new knowledge ($\mathcal{A}^t(\textcolor{blue}{x_v})$ and $\mathcal{G}^t(\textcolor{red}{x_u})$), and
(2) the level of interference between them, captured by $\mathcal{K}^t(\textcolor{blue}{x_v},\textcolor{red}{x_u})$.
Since the gradients with respect to the logits (i.e., $\mathcal{A}^t(\textcolor{blue}{x_v})$ and $\mathcal{G}^t(\textcolor{red}{x_u})$) are typically bounded, this implies that the relative interference is the dominant factor driving forgetting. A larger $||\mathcal{K}^t||_F$ means more interference between $\textcolor{red}{x_u}$ and $\textcolor{blue}{x_v}$. Besides, our analysis in this section also depends on the assumption of ``the eNTK matrix $\mathcal{K}^t$ remains roughly stable over training'', which is well-validated in \citet{ren2024learning} and our following experiments.}

So we first measure the interference between the post-training dataset and prior knowledge during the training process of SFT. As it requires huge computation to calculate $||\mathcal{K}^t||_F$ directly, we estimate the Lower Bound of Kernel $||\mathcal{K}^t||_F$ (\textbf{LBK}) as follows:
\begin{align}
    &
            ||\Delta \log\pi^t(\textcolor{blue}{x_v})|_{\textcolor{red}{x_u}}||_F
        \leq \eta            ||\mathcal{A}^t(\textcolor{blue}{x_v})||_F        
            ||\mathcal{K}^t(\textcolor{blue}{x_v},\textcolor{red}{x_u})||_F
            ||\mathcal{G}^t(\textcolor{red}{x_u})||_F
            + ||\bigO(\eta^2)||_F,
     \\&\text{LBK}_{uv}^t\triangleq\frac{||\Delta \log\pi^t(\textcolor{blue}{x_v})|_{\textcolor{red}{x_u}}||_F^2}{||\mathcal{A}^t(\textcolor{blue}{x_v})||_F^2||\mathcal{G}^t(\textcolor{red}{x_u})||_F^2}\lesssim||\mathcal{K}^t(\textcolor{blue}{x_v},\textcolor{red}{x_u})||_F^2\label{eq:lower_bound}.
\end{align}
Specifically, we sample responses from base models on Grounding as our prior knowledge $\textcolor{blue}{x_v}$, which follow a similar answer format to jigsaw puzzles (\emph{i.e.}, with numbers enclosed within `[]’) and exhibit the most severe forgetting as in Tab.~\ref{tab:main_results_transposed}. We then conduct SFT training on three different datasets $\textcolor{red}{x_u}$ and record the LBK between prior knowledge and training examples. As shown in Fig.~\ref{fig:ntk}, \textcolor{black}{the LBK quickly stabilizes after only a few dozen training steps}. Besides, the Non-Reasoning data exhibit much larger LBK compared to the Reasoning data, suggesting stronger interference with prior knowledge. Appendix Fig.~\ref{fig:scatter} further shows that introducing reasoning trajectories improves the model’s confidence in answers. These suggest that directly providing answers to new tasks, without linking them to the model’s existing perceptual abilities through reasoning trajectories, causes the output distribution to shift abruptly as in Fig.~\ref{fig:rec_change}, which heavily disrupts prior knowledge and leads to catastrophic forgetting. In contrast, for Reasoning data, the LBK is smaller, meaning that interference with prior knowledge is weaker and forgetting progresses more slowly.

\begin{figure}
  \centering
  \includegraphics[width=0.85\linewidth]{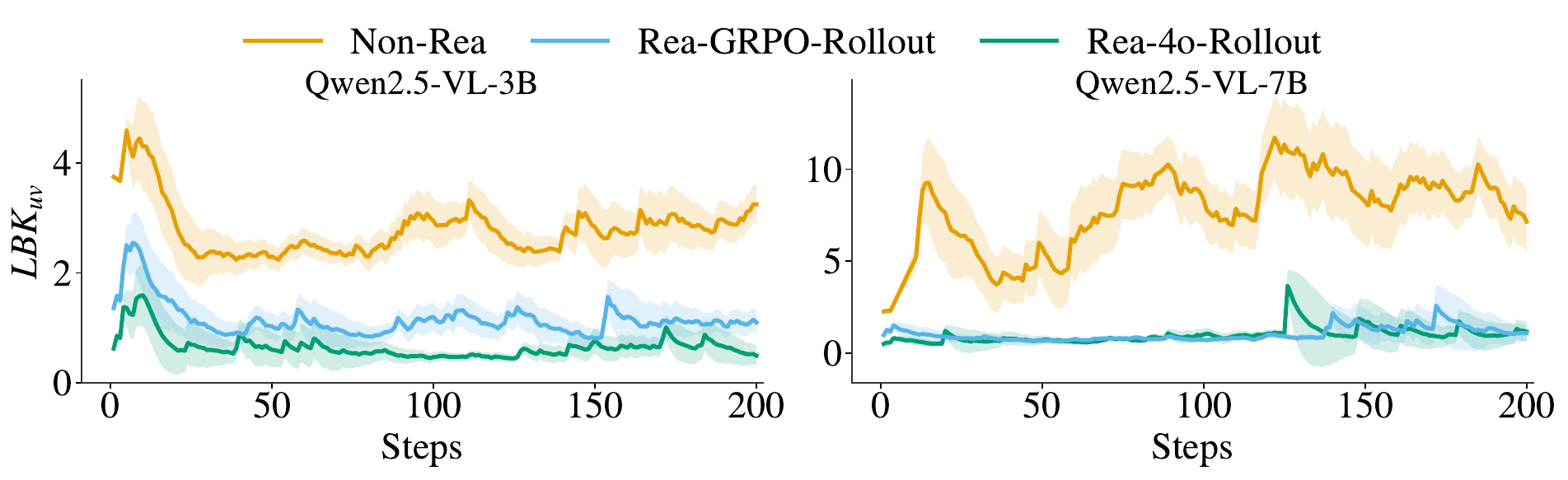}
  \vspace{-0.1in}
  \caption{Evolution of $\text{LBK}_{uv}$ during the SFT process on three different datasets.}
  \label{fig:ntk}
  \vspace{-0.1in}
\end{figure}

\begin{figure}
  \centering
  \includegraphics[width=0.8\linewidth]{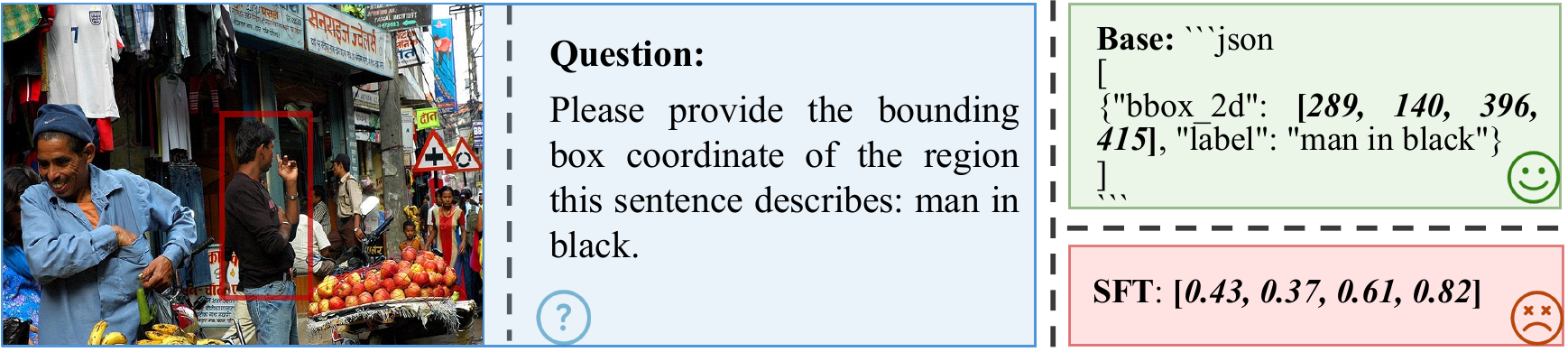}
  \vspace{-0.1in}
  \caption{Qualitative result on Grounding before and after training with SFT on Non-Reasoning data. Model finetuned on Non-Reasoning data often switches its output format on Grounding, \emph{i.e.}, from the expected JSON format containing bbox\_2d and label to a list of numbers.}
  \label{fig:rec_change}
  \vspace{-0.2in}
\end{figure}

\subsection{What makes the model-generated Reasoning Data different?}\label{sec:data}
Next, we investigate why reasoning data generated by model itself (Rea-GRPO-Rollout) and by GPT-4o (Rea-4o-Rollout) still result in different forgetting behaviors. To do this, we use the \emph{perplexity} (PPL) of the base model as a measure to compare how well each type of data aligns with the model’s distribution. 
As shown in Fig.~\ref{fig:steck}, Rea-GRPO-Rollout tends to align with the lower-perplexity region of the base model’s output distribution, whereas Rea-4o-Rollout typically lies much higher than Rea-GRPO-rollout. This suggests that Rea-GRPO-Rollout is more compatible with the base model’s prior knowledge when compared to the Rea-4o-Rollout. 


\begin{figure}
  \centering
  \includegraphics[width=0.9\linewidth]{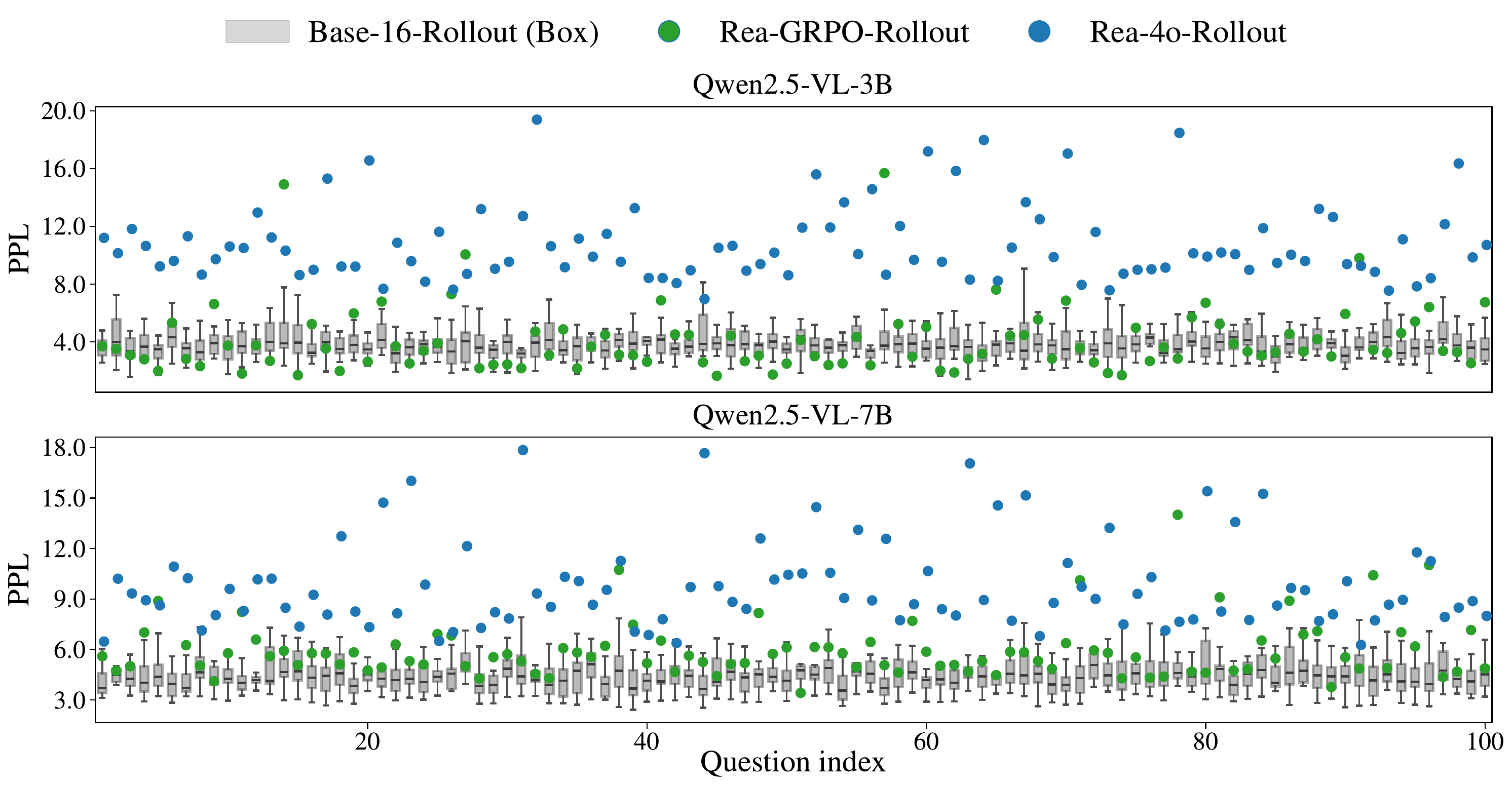}
  \vspace{-0.1in}
  \caption{PPL of Rea-GRPO-Rollout and Rea-4o-Rollout under the base model. Base-16-Rollout (Box) denotes PPL range estimated from 16 rollouts generated by base model, serving as a reference.}
  \label{fig:steck}
  \vspace{-0.2in}
\end{figure}

But then, why does a post-training strategy that focuses on low-perplexity samples alleviate catastrophic forgetting of prior knowledge?
Fortunately, we can answer this using the following symmetry property from learning dynamics:

\begin{restatable}{theorem}{SymmetricLearningDynamics}\label{theo:symmetry} The one-step learning dynamics has the property of symmetry:
    \begin{equation}
        \Delta\log\pi^t(\textcolor{blue}{x_v})|_{\textcolor{red}{x_u}} = \Delta\log\pi^t(\textcolor{red}{x_u})|_{\textcolor{blue}{x_v}}+ \bigO(\eta^2).
    \end{equation}
\end{restatable}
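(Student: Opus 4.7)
The plan is to apply Theorem~\ref{theo:decomp} to both sides of the claimed identity and reconcile the two leading-order expressions by a short transpose argument. Specifically, I would first write, at the common base point $\theta$,
\begin{equation*}
\Delta\log\pi(\textcolor{blue}{x_v})|_{\textcolor{red}{x_u}} = \eta\,\mathcal{A}(\textcolor{blue}{x_v})\,\mathcal{K}(\textcolor{blue}{x_v},\textcolor{red}{x_u})\,\mathcal{G}(\textcolor{red}{x_u}) + \bigO(\eta^2),
\end{equation*}
and, by the same derivation with the roles of $\textcolor{red}{x_u}$ and $\textcolor{blue}{x_v}$ interchanged,
\begin{equation*}
\Delta\log\pi(\textcolor{red}{x_u})|_{\textcolor{blue}{x_v}} = \eta\,\mathcal{A}(\textcolor{red}{x_u})\,\mathcal{K}(\textcolor{red}{x_u},\textcolor{blue}{x_v})\,\mathcal{G}(\textcolor{blue}{x_v}) + \bigO(\eta^2).
\end{equation*}
The remaining task is to show that the two leading $\eta$-terms agree.

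Two elementary facts do the work. First, the row vector $\mathcal{A}(x)\in\mathbb{R}^{1\times V}$ and the column vector $\mathcal{G}(x)\in\mathbb{R}^{V\times 1}$ are the same object $\nabla_{\vz}\log\pi_\theta(x)$ written in two orientations, so $\mathcal{A}(x)=\mathcal{G}(x)^\top$. Second, directly from its definition $\mathcal{K}(\textcolor{blue}{x_v},\textcolor{red}{x_u})=\bigl(\nabla_\theta \vz(\textcolor{blue}{x_v})\bigr)\bigl(\nabla_\theta \vz(\textcolor{red}{x_u})\bigr)^\top$, the eNTK obeys $\mathcal{K}(\textcolor{blue}{x_v},\textcolor{red}{x_u})^\top=\mathcal{K}(\textcolor{red}{x_u},\textcolor{blue}{x_v})$. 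Since $\mathcal{A}(\textcolor{blue}{x_v})\mathcal{K}(\textcolor{blue}{x_v},\textcolor{red}{x_u})\mathcal{G}(\textcolor{red}{x_u})$ is a $1\times 1$ scalar and thus equals its own transpose, a single chain of transposes yields
\begin{equation*}
\mathcal{A}(\textcolor{blue}{x_v})\,\mathcal{K}(\textcolor{blue}{x_v},\textcolor{red}{x_u})\,\mathcal{G}(\textcolor{red}{x_u})
= \mathcal{G}(\textcolor{red}{x_u})^\top\,\mathcal{K}(\textcolor{blue}{x_v},\textcolor{red}{x_u})^\top\,\mathcal{A}(\textcolor{blue}{x_v})^\top
= \mathcal{A}(\textcolor{red}{x_u})\,\mathcal{K}(\textcolor{red}{x_u},\textcolor{blue}{x_v})\,\mathcal{G}(\textcolor{blue}{x_v}),
\end{equation*}
and multiplying by $\eta$ equates the leading terms on the two sides.

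I do not expect a genuine obstacle in this argument; it is essentially linear algebra built on top of Theorem~\ref{theo:decomp}. The only point that deserves care is bookkeeping of the remainders: the two $\bigO(\eta^2)$ terms are Taylor remainders at the \emph{same} base parameter $\theta$ but arising from two different one-step updates, so the argument only justifies equality at first order in $\eta$ with a second-order slack, which is exactly what the theorem asserts. I would close by folding both remainders into a single $\bigO(\eta^2)$ to recover the stated symmetry.
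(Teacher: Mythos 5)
Your proof is correct and follows essentially the same route as the paper: both arguments rest on the observation that $\Delta\log\pi(\textcolor{blue}{x_v})|_{\textcolor{red}{x_u}}$ is a scalar equal to its own transpose, combined with $\mathcal{A}(x)=\mathcal{G}(x)^\top$ and $\mathcal{K}(\textcolor{blue}{x_v},\textcolor{red}{x_u})^\top=\mathcal{K}(\textcolor{red}{x_u},\textcolor{blue}{x_v})$, and then absorb the two Taylor remainders into a single $\bigO(\eta^2)$. Your version is somewhat more explicit in spelling out the three facts separately, but it is the same transpose argument the paper uses.
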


\begin{wrapfigure}{r}{0.5\linewidth} 
\begin{minipage}{1.0\linewidth}
    \vspace{-12pt}
    \centering
    \includegraphics[width=\linewidth]{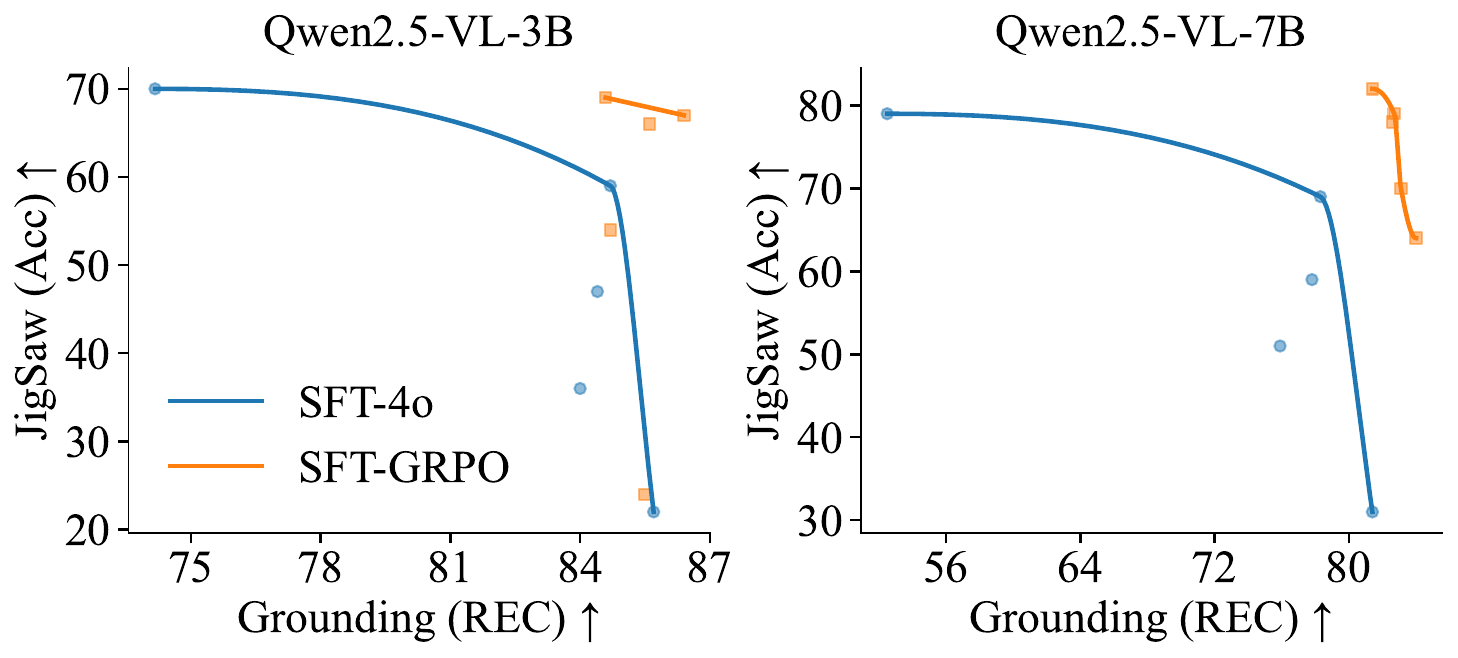}
    \vspace{-20pt}
    \caption{Pareto front curves on the jigsaw and Rec tasks for models fine-tuned on Rea-4o-Rollout and Rea-GRPO-Rollout data.}
    \vspace{-10pt}
    \label{fig:pareto_sft4o_vs_sftgrpo}
\end{minipage}
\vspace{-3pt}
\end{wrapfigure}
According to Theorem~\ref{theo:symmetry} (Proof in Appendix~\ref{sec:learning_dynamics}), we find that the influence of learning $\textcolor{red}{x_u}$ on $\textcolor{blue}{x_v}$ is nearly the same as the influence of learning $\textcolor{blue}{x_v}$ on $\textcolor{red}{x_u}$.
Additionally, since the eNTK matrix stabilizes~\citep{ren2024learning} in the later stages of pretraining, the interactions between $\textcolor{red}{x_u}$ and $\textcolor{blue}{x_v}$ remain consistent across the training step $t$, also observed in Fig.~\ref{fig:ntk}. Therefore, models pretrained with prior knowledge show lower perplexity for Rea-GRPO-Rollout, indicating that training with prior knowledge enhances these samples. During post-training, further training on the Rea-GRPO-Rollout samples results in less interference with prior knowledge compared to other higher perplexity samples like Rea-4o-Rollout.
As shown in Appendix Fig.~\ref{fig:rec-ppl}, when training with Rea-GRPO-Rollout, perplexity of sentences representing prior knowledge continues to decrease on 3B model and remains low on 7B model. In contrast, under Rea-4o-Rollout, perplexity steadily increases.
As a result, forgetting effect for Rea-GRPO-Rollout is less pronounced than for Rea-4o-Rollout. Moreover, reinforcement learning algorithms like GRPO, which naturally generate training samples through model rollouts, tend to produce samples with lower perplexity under the base model. This explains why reinforcement learning methods are less prone to catastrophic forgetting.

\subsection{More experiments on Rea-GRPO-Rollout and Rea-4o-Rollout}
We further plot the Pareto front curves of accuracy on the Grounding and jigsaw tasks during training on the two datasets. As shown in Fig.~\ref{fig:pareto_sft4o_vs_sftgrpo}, the Pareto front from SFT-Rea-GRPO-Rollout is clearly better than that from SFT-Rea-4o-Rollout. Moreover, models trained with Rea-GRPO-Rollout show much smaller performance variance on the Grounding task during the SFT process, indicating that Rea-GRPO-Rollout interferes less with prior knowledge of base models. In contrast, SFT-Rea-4o-Rollout improves jigsaw performance at the cost of degrading Grounding performance.
This result further highlights the importance of low-perplexity under the base model as illustrated in Sec.~\ref{sec:data}. Though Rea-4o-Rollout data generally has a smaller LBK on Qwen2.5-VL-3B as in Fig.~\ref{fig:ntk}, it still forgets more due to its property of high-perplexity.

\subsection{\textcolor{black}{LLM Experiments on Math Reasoning and ScienceQA}}
\textcolor{black}{
We additionally provide experiments on the LLM Qwen2.5-Instruct~\citep{qwen2.5} here, showing its forgetting behavior during post-training on math reasoning, along with the corresponding results. We hope these extra experiments can further strengthen the generality and credibility of our theoretical analysis and conclusions. More detailed experiments setup can be found in Appendix~\ref{appendix:math_reasoning}.
As summarized in Tab.~\ref{tab:llm_math_results}, the math reasoning experiments exhibit a forgetting pattern highly consistent with our multimodal jigsaw setting: on both 3B and 7B scales, SFT-Non-Rea achieves the largest performance drop on the old math (GSM8K, MATH-500) and instruction-following (IFEval) benchmarks, while reasoning-augmented SFT with external CoT (SFT-Rea-4o-Rollout) forgets less but still substantially more than SFT-Rea-GRPO-Rollout. The latter attains strong gains on the new ORZ task while keeping the performance on old tasks close to the base models, indicating the same hierarchy of forgetting severity, \emph{i.e.}, Non-Rea $>$ Rea-4o $>$ Rea-GRPO. 
}

\textcolor{black}{
To probe the underlying mechanism, we compute LBK between post-training samples and prior math knowledge during SFT. As shown in Fig.~\ref{fig:ntk-math-3b7b}, Non-Rea data consistently display much larger LBK values and Rea-4o also contains occasional high-LBK outliers, providing further evidence for the generality of our learning-dynamics analysis in Sec.~\ref{sec:ntk-lbk}. Moreover, Fig.~\ref{fig:steck-math} shows that Rea-4o-Rollout is concentrated in the high-perplexity region of the base models, whereas Rea-GRPO-Rollout lies closer to the low-perplexity region, mirroring our findings on jigsaw puzzles and supporting the low-perplexity training hypothesis in Sec.~\ref{sec:data} that post-training on model-aligned (low-PPL) reasoning trajectories mitigates catastrophic forgetting. Besides, we also analyze the Pareto front curves (Fig.~\ref{fig:llm-parato}) and Perplexity on prior knowledge (Fig.~\ref{fig:math-ppl})  of different SFT datasets in Appendix~\ref{appendix:math_reasoning}, it is consistent with our theory and prior analysis on jigsaw puzzles.
}

\textcolor{black}{
Beyond math reasoning, we also conduct experiments on a scientific multiple-choice QA benchmark to test the robustness of our conclusions.
Concretely, we use the Sci-MCQ4 subset from SciKnowEval~\citep{sciknoweval}; detailed settings and full results are provided in Appendix~\ref{appendix:sci_mcq4}. 
As shown in Tab.~\ref{tab:llm_sci_results} and Fig.~\ref{fig:ntk-mcq4},~\ref{fig:steck-mcq4}, we again observe the same hierarchy of forgetting severity on prior benchmarks, while SFT-Rea-GRPO-Rollout achieves the best trade-off between performance gains on the new Sci-MCQ4 task and retention on old tasks, further supporting the generality of our analysis.}

\begin{table}
  \centering
  \scriptsize
  \setlength{\tabcolsep}{3pt}
  \caption{\textcolor{black}{Performance comparison across post-trained models of \textbf{Qwen2.5-3B-Instruct} and \textbf{Qwen2.5-7B-Instruct}.
  Numbers in parentheses denote the change w.r.t.\ \emph{each scale's} base model.}}
  \label{tab:llm_math_results}
  \vspace{-0.1in}
  \resizebox{\linewidth}{!}{
  \begin{tabular}{l ccccc ccccc}
    \toprule
    & \multicolumn{5}{c}{\textbf{Qwen2.5-3B-Instruct}} & \multicolumn{5}{c}{\textbf{Qwen2.5-7B-Instruct}} \\
    \cmidrule(lr){2-6}\cmidrule(lr){7-11}
    & \textbf{Base}
    & \textbf{RFT}
    & \textbf{SFT-Non-Rea}
    & \makecell[c]{\textbf{SFT-Rea-}\\\textbf{4o-Rollout}}
    & \makecell[c]{\textbf{SFT-Rea-}\\\textbf{GRPO-Rollout}}
    & \textbf{Base}
    & \textbf{RFT}
    & \textbf{SFT-Non-Rea}
    & \makecell[c]{\textbf{SFT-Rea-}\\\textbf{4o-Rollout}}
    & \makecell[c]{\textbf{SFT-Rea-}\\\textbf{GRPO-Rollout}} \\
    \midrule
    \multicolumn{11}{c}{\textbf{\textit{Open-Reasoner-Zero (test) (New Task)}}} \\ \midrule
    \textit{Training steps}
      & -- & 2,650 & 1,600 & 2,140 & 2,140
      & -- & 2,650 & 1,600 & 2,140 & 2,140 \\
    ORZ Test
      & 21.3  & 35.0 \up{13.7} & 23.4 \up{2.1} & 35.4 \up{14.0} & 37.8 \up{16.4}
      & 32.1   & 49.3 \up{17.2} & 30.3 \down{1.8} & 45.0 \up{12.9} & 53.4 \up{21.3} \\
    \midrule
    \multicolumn{11}{c}{\textbf{\textit{Math Reasoning (Old Tasks)}}} \\ \midrule
    GSM8k
      & 84.1 & 83.4 \down{0.7} & 15.1 \bdown{\textbf{69.0}} & 79.9 \bdown{\textbf{9.2}} & 83.0 \down{1.1}
      & 90.1    & 90.2 \up{0.1}    & 21.8 \bdown{\textbf{68.4}}  & 85.8 \bdown{\textbf{4.3}} & 90.3 \up{0.2} \\
    Math-500
      & 42.4 & 55.2 \up{12.8} & 19.4 \bdown{\textbf{23}} & 50.8 \up{8.4} & 54.4 \up{12.0}
      & 66.6    & 64.8 \down{1.8} & 26.4 \bdown{\textbf{40.2}} & 57.2 \bdown{\textbf{9.4}} & 66.4 \down{0.2} \\
    \midrule
    \multicolumn{11}{c}{\textbf{\textit{Instruction Following (Old Task)}}} \\ \midrule
    IFEval
      & 71.6 & 73.4 \up{1.8} & 64.0 \bdown{\textbf{7.6}} & 68.0 \down{3.6} & 72.7 \up{1.1}
      & 80.6    & 80.5 \down{0.1} & 57.2 \bdown{\textbf{23.4}} & 64.4 \bdown{\textbf{16.2}} & 80.0 \down{0.6} \\
    \bottomrule
  \end{tabular}}
  \vspace{-0.18in}
\end{table}

\begin{figure}
  \centering
  \includegraphics[width=0.85\linewidth]{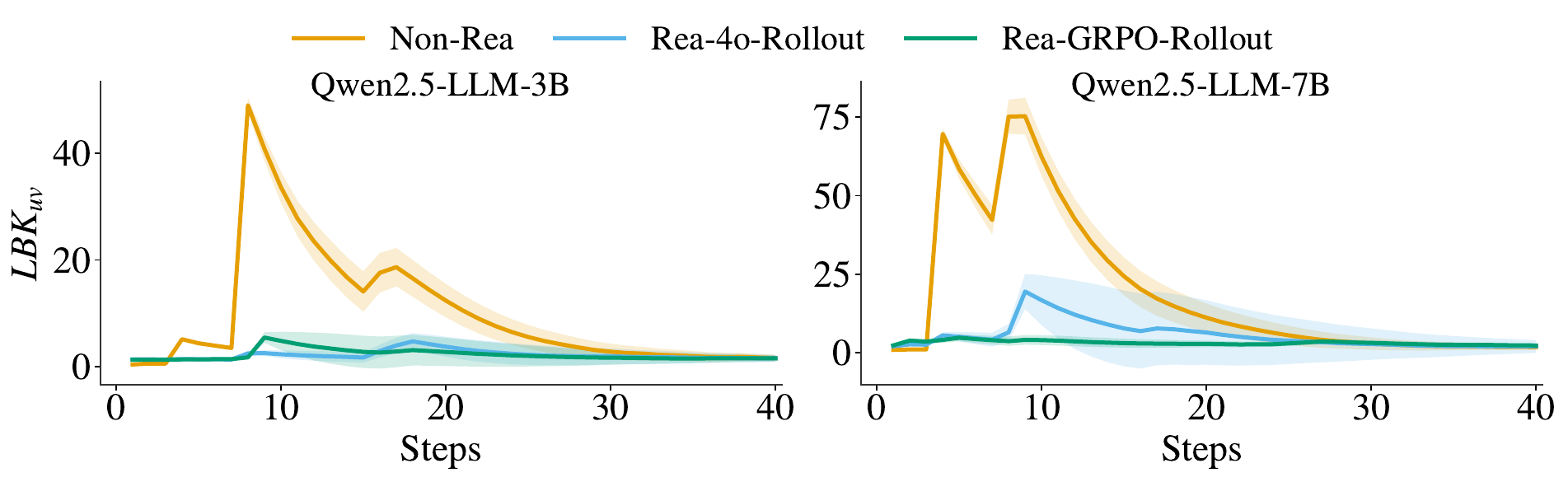}
  \vspace{-0.15in}
  \caption{\textcolor{black}{Evolution of $\text{LBK}_{uv}$ during the SFT process with three different datasets on math dataset.}}
  \label{fig:ntk-math-3b7b}
  \vspace{-0.2in}
\end{figure}

\begin{figure}
  \centering
  \includegraphics[width=0.85\linewidth]{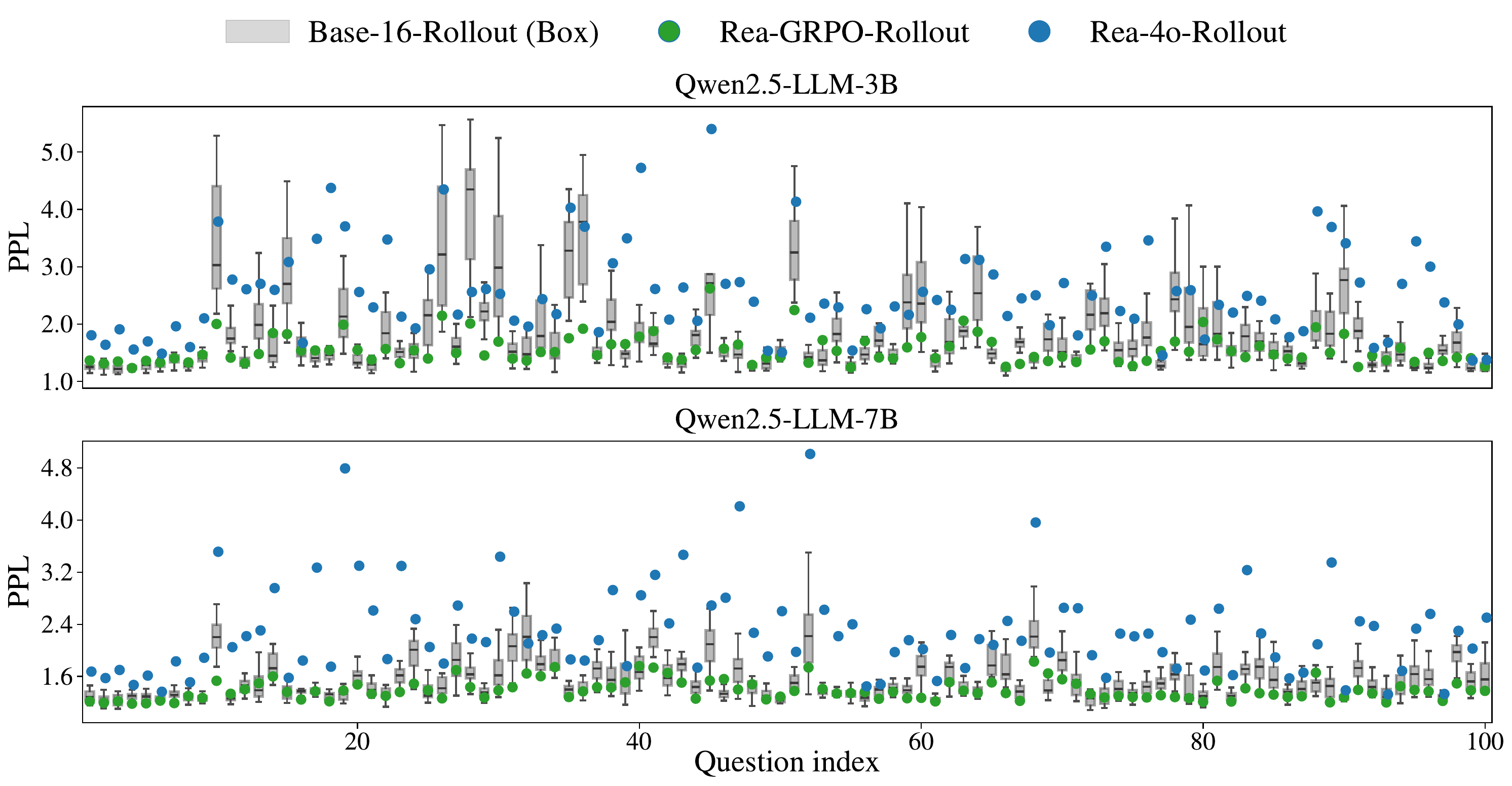}
  \caption{\textcolor{black}{PPL of Rea-GRPO-Rollout and Rea-4o-Rollout math reasoning dataset under the base LLM (Qwen2.5-Instruct). Base-16-Rollout (Box) denotes PPL range estimated from 16 rollouts generated by base model, serving as a reference.}}
  \vspace{-0.1in}
  \label{fig:steck-math}
\end{figure}

\subsection{\textcolor{black}{`Cooperation' between SFT and RFT}}\label{sec:appendix_sft_rft}

\textcolor{black}{In our previous experiments, we first generate reasoning data after RFT has achieved a high jigsaw accuracy. SFT training on such data can not only achieve high accuracy on the new task, but also preserve old knowledge better than Rea-4o-Rollout. This phenomenon suggests that the data distribution is a key factor that determines whether the model forgets during post-training.}

\textcolor{black}{To further verify this, we show that if we only want to generate data aligned with the model’s own distribution and capable of teaching the new task, it does not require running RFT to very high accuracy. As shown in Tab.~\ref{tab:self_generated_cot_sft}, we run RFT for only one epoch (5{,}472 steps), during which the model’s jigsaw accuracy stays below 5\% (see Fig.~\ref{fig:pair} (Left) in the appendix). Even so, by collecting the model’s rollout CoT and pairing it with the correct answers, we can already construct an effective SFT dataset (Rea-Self-Generated). Fine-tuning the base model on this dataset yields new-task accuracy comparable to RFT and SFT-Rea-GRPO-Rollout, while its performance on old tasks is also similar to them and much better than SFT-Rea-4o-Rollout.}

\section{Discussion and Conclusion}

SFT is a widely used post-training method and is often employed as a cold-start phase for RFT~\citep{DeepSeek-R1}, helping the model acquire basic skills that support subsequent exploration. Besides, SFT also enables the base model to master novel tasks quickly. However, manually curated SFT corpora can lead to the forgetting of prior knowledge. \textcolor{black}{In this work, we show that one can instead construct more stable SFT training data from the model’s own reasoning trajectories produced by RFT. Even a short RFT phase is sufficient to generate such self-consistent data (Sec.~\ref{sec:appendix_sft_rft}), and a subsequent SFT update on this corpus attains new-task performance comparable to RFT while preserving prior knowledge better than SFT-Rea-4o-Rollout.} Therefore, developing an efficient and reliable interplay between SFT and RFT that combines their respective advantages remains a promising problem.

This paper provides a systematic investigation into how post-training algorithms affect knowledge retention in multimodal large language models. By introducing jigsaw puzzles as a genuinely novel task, we uncover a clear contrast between SFT and RFT: while SFT enables rapid task acquisition, it suffers from severe forgetting; in contrast, RFT achieves stable learning without significantly degrading prior capabilities. Through empirical studies and theoretical analysis grounded in learning dynamics, we show that this difference arises not from the training algorithm itself, but from the distribution of training data. Specifically, introducing reasonable reasoning trajectories into the SFT process can help alleviate forgetting due to less interference with prior knowledge. Besides, RFT naturally discovers low-perplexity examples that are already partially aligned with the model's output space, making them less disruptive to previous knowledge. Furthermore, using RFT rollouts as supervision enables SFT to forget less, underscoring the importance of fine-tuning data quality. These findings suggest that future post-training efforts should move beyond algorithmic choices and focus more on data selection.

\section*{Acknowledgments}
The authors wish to thank the anonymous reviewers for their helpful comments. This work was partially funded by Henan Province Major Industrial ``Challenge-Based Innovation'' (No. 251000210300), National Natural Science Foundation of China (No.62476061, 62376061, 62576106, 62521004), Shanghai Rising-Star Program (23QA1400200), and Natural Science Foundation of Shanghai (23ZR1403500). Supported by Shanghai Artificial Intelligence Laboratory.

\section*{Ethics Statement}

From a data distribution perspective, this research employs learning dynamics to explain the advantages of the sampling distribution induced by RL and why RL training tends to yield reduced forgetting. We firmly state that this work is intended for ethical and constructive purposes. Users of this method bear the full responsibility for ensuring it is applied in a safe, fair, and harmless manner. Any misuse of this method is strictly against the intent of the authors.

\section*{Reproducibility Statement}
We have described our theory analysis and the experiment setup in Sec.~\ref{sec:experiments} and Sec.~\ref{sec:analysis}. To support reproducibility, we will open-source our code.

\bibliography{iclr2026_conference}
\bibliographystyle{iclr2026_conference}

\appendix

\section{LLM Usage}
This article employs large language models solely for polishing the sentence structures to better align with standard English writing conventions.

\section{Limitations}
Due to resource limitations, our experiments are currently restricted to the Qwen-2.5-VL-3B/7B models. In future work, we plan to extend our analysis to larger multimodal models and large language models to assess the generality of our findings. Additionally, this study currently focuses only on the jigsaw puzzle task. Investigating forgetting behaviors across a broader range of multimodal tasks is an important direction we aim to explore next.

\section{Connection between GRPO and SFT}\label{sec:connection}
This section follows the discussion of DeepSeekMath~\citep{shao2024deepseekmath} on the unified paradigm of GRPO and SFT closely. And we include the derivation here for the completeness of the paper. We will first derive the gradient of GRPO loss. Specifically, we use the following unbiased estimator as our KL divergence loss:
\begin{equation}
\mathbf{D}_{\text{KL}}\left(\pi_{\theta}||\pi_{\text{ref}}\right)=\frac{\pi_{\text{ref}}(o_{i,t}|q,o_{i,<t})}{\pi_{\theta}(o_{i,t}|q,o_{i,<t})} - \log \frac{\pi_{\text{ref}}(o_{i,t}|q,o_{i,<t})}{\pi_{\theta}(o_{i,t}|q,o_{i,<t})} - 1
\end{equation}
By substituting the specific form of the KL divergence into Eq.~\ref{grpo_loss}, we get the following function:
\begin{equation}
\begin{aligned}
\mathcal{J}_{\text{GRPO}}(\theta) &= \mathbf{E}_{q,\{o_i\}_{i=1}^G\sim\pi_{\theta_{\text{old}}}(\cdot|q)} \frac{1}{G} \sum_{i=1}^G \frac{1}{|o_i|} \sum_{t=1}^{|o_i|} \Bigg[ \frac{\pi_{\theta}(o_{i,t}|q,o_{i,<t})}{\pi_{\theta_{\text{old}}}(o_{i,t}|q,o_{i,<t})} A_{i,t} \Bigg. \\
& \quad - \beta \left( \frac{\pi_{\text{ref}}(o_{i,t}|q,o_{i,<t})}{\pi_{\theta}(o_{i,t}|q,o_{i,<t})} - \log \frac{\pi_{\text{ref}}(o_{i,t}|q,o_{i,<t})}{\pi_{\theta}(o_{i,t}|q,o_{i,<t})} - 1 \right) \Bigg].
\end{aligned}
\end{equation}

Therefore, the gradient of $\mathcal{J}_{\text{GRPO}}(\theta)$ is:
\begin{equation}\label{gradient_grpo}
\begin{aligned}
\nabla_\theta\mathcal{J}_{\text{GRPO}}(\theta) &= \mathbf{E}_{q,\{o_i\}_{i=1}^G\sim\pi_{\theta_{\text{old}}}(\cdot|q)} \frac{1}{G} \sum_{i=1}^G \frac{1}{|o_i|} \sum_{t=1}^{|o_i|} \Bigg[ \frac{\nabla_\theta\pi_{\theta}(o_{i,t}|q,o_{i,<t})}{\pi_{\theta_{\text{old}}}(o_{i,t}|q,o_{i,<t})} A_{i,t} \Bigg. \\
& \quad - \beta \nabla_\theta\left( \frac{\pi_{\text{ref}}(o_{i,t}|q,o_{i,<t})}{\pi_{\theta}(o_{i,t}|q,o_{i,<t})} - \log \frac{\pi_{\text{ref}}(o_{i,t}|q,o_{i,<t})}{\pi_{\theta}(o_{i,t}|q,o_{i,<t})} - 1 \right) \Bigg] \\
= &\mathbf{E}_{q,\{o_i\}_{i=1}^G\sim\pi_{\theta_{\text{old}}}(\cdot|q)} \frac{1}{G} \sum_{i=1}^G \frac{1}{|o_i|} \sum_{t=1}^{|o_i|} \Bigg[ A_{i,t}\nabla_\theta\log\pi_{\theta}(o_{i,t}|q,o_{i,<t}) \Bigg. \\
& \quad + \beta \left( 1 - \frac{\pi_{\theta}(o_{i,t}|q,o_{i,<t})}{\pi_{\text{ref}}(o_{i,t}|q,o_{i,<t})} \right)\frac{\pi_{\text{ref}}(o_{i,t}|q,o_{i,<t})}{\pi_{\theta}(o_{i,t}|q,o_{i,<t})^{2}}\nabla_\theta\pi_{\theta}(o_{i,t}|q,o_{i,<t}) \Bigg]\\
=&\mathbf{E}_{q,\{o_i\}_{i=1}^G\sim\pi_{\theta_{\text{old}}}(\cdot|q)} \frac{1}{G} \sum_{i=1}^G \frac{1}{|o_i|} \sum_{t=1}^{|o_i|} \Bigg[ A_{i,t} + \\&\beta \left( \frac{\pi_{\text{ref}}(o_{i,t}|q,o_{i,<t})}{\pi_{\theta}(o_{i,t}|q,o_{i,<t})} - 1 \right) \Bigg]\nabla_\theta\log\pi_{\theta}(o_{i,t}|q,o_{i,<t}),
\end{aligned}
\end{equation}

Here, the second equal sign comes from the fact that $\pi_{\theta_{\text{old}}}(\cdot) = \pi_{\theta}(\cdot)$ in our experiments.

In addition, the SFT objective is to maximize the following format:
\begin{equation}
    \mathcal{J}_{\text{SFT}}(\theta)= \mathbf{E}_{q,o\sim \text{Dataset}_{\text{sft}}}\frac{1}{|o|} \sum_{t=1}^{|o|}\log \pi_\theta(o_t|q,o_{<t}).
\end{equation}
So, the gradient of SFT objective is:
\begin{equation}\label{gradient_sft}
    \nabla_\theta\mathcal{J}_{\text{SFT}}(\theta)= \mathbf{E}_{q,o\sim \text{Dataset}_{\text{sft}}}\frac{1}{|o|} \sum_{t=1}^{|o|}    \nabla_\theta\log \pi_\theta(o_t|q,o_{<t}).
\end{equation}

Comparing Eq.~\ref{gradient_grpo} and Eq.~\ref{gradient_sft}, we find that both gradients try to optimize the likelihood of the model. However, they are optimized in different data sources with different gradient coefficients. 

\section{Proof of Learning Dynamics Related Theorem}\label{sec:learning_dynamics}
\Decomposition*
\begin{proof}
We first apply first-order Taylor expansion to approximate $\log\pi_{\theta^{t+1}}(\textcolor{blue}{x_v})$ within Eq.~\ref{eq:def_learning_dynamics}:
\begin{equation}\label{eq:taylor}
\log\pi_{\theta^{t+1}}(\textcolor{blue}{x_v}) = \log\pi_{\theta^t}(\textcolor{blue}{x_v}) + \
    \langle \nabla_\theta  \log\pi_{\theta^t}(\textcolor{blue}{x_v}),\ \Delta\theta^t(\textcolor{red}{x_u}) \rangle + \bigO(\| \Delta\theta^t(\textcolor{red}{x_u})\|^2)
.
\end{equation}

Then, substituting the gradient descent item (Eq.~\ref{eq:gd}) into the leading term and applying the chain rule of calculus, we get
\begin{align}\nonumber
    \langle\underbrace{
        \nabla_\theta  \log\pi_{\theta^t}(\textcolor{blue}{x_v})
    }_{1 \times d},\ 
    \underbrace{ \Delta\theta^t(\textcolor{red}{x_u}) }_{1 \times d}\rangle
    &=  \big(
        \underbrace{
            \nabla_{\vz} \log\pi_{\theta^t}(\textcolor{blue}{x_v})
        }_{1 \times V}
        \underbrace{
            \nabla_{\theta}\vz(\textcolor{blue}{x_v})|_{\theta^t}
        }_{V \times d}
        \big)
        \big(\eta\cdot
        \underbrace{
        \nabla_\theta \log \pi_\theta^t (\textcolor{red}{x_u})
        }_{1 \times d}
        \big)\tp
\\\nonumber
    &=
    \underbrace{
        \nabla_{\vz}\log\pi_{\theta^t}(\textcolor{blue}{x_v})
    }_{1 \times V}
    \underbrace{
\nabla_{\theta}\vz(\textcolor{blue}{x_v})|_{\theta^t}
    }_{V \times d}
    \big(
    \underbrace{
        \eta\cdot\nabla_{\vz} \log \pi_{\theta^t} (\textcolor{red}{x_u})
    }_{1 \times V}
    \underbrace{
        \nabla_{\theta}\vz^t(\textcolor{red}{x_u})|_{\theta^t}
    }_{V \times d}
    \big)\tp
\\\nonumber
    &= \eta
      \underbrace{
        \nabla_{\vz}\log\pi_{\theta^t}(\textcolor{blue}{x_v})
    }_{1 \times V}
    \big[\underbrace{
\nabla_{\theta}\vz(\textcolor{blue}{x_v})|_{\theta^t}
    }_{V \times d}
    \underbrace{
        \big(\nabla_{\theta}\vz(\textcolor{red}{x_u})|_{\theta^t}\big)\tp
    }_{d \times V}
    \big]
    \underbrace{
        \big(\nabla_{\vz} \log \pi_{\theta^t} (\textcolor{red}{x_u})
    \big)\tp}_{V \times 1}
    \\
    &= \eta \mathcal{A}^t(\textcolor{blue}{x_v})
            \mathcal{K}^t(\textcolor{blue}{x_v},\textcolor{red}{x_u})
            \mathcal{G}^t(\textcolor{red}{x_u})
,\label{app:ep_decompose}
\end{align}
where $d$ is the dimension of model parameters $\theta$.

For the remaining second-order term, we should notice that the trick of gradient clip is usually utilized to avoid too large gradients, we have
\begin{equation}
    \bigO(\| \Delta\theta^t(\textcolor{red}{x_u})\|^2) = \bigO(\eta^2\|\nabla_\theta \log \pi_{\theta^t} (\textcolor{red}{x_u})\|^2)=\bigO(\eta^2).
\end{equation}
Therefore, by reorganizing the terms in Eq.~\ref{eq:taylor}, we have
\begin{equation*}
\Delta \log\pi^t(\textcolor{blue}{x_v})|_{\textcolor{red}{x_u}}
        = \eta
            \mathcal{A}^t(\textcolor{blue}{x_v})
            \mathcal{K}^t(\textcolor{blue}{x_v},\textcolor{red}{x_u})
            \mathcal{G}^t(\textcolor{red}{x_u})
            + \bigO(\eta^2).\qedhere
\end{equation*}
\end{proof}

The second term in this decomposition, $\mathcal{K}^t(\textcolor{blue}{x_v},\textcolor{red}{x_u})$, is called the empirical neural tangent kernel and can evolve during training as the network adapts. For sufficiently wide networks initialized properly and trained with small learning rates, $\mathcal{K}^t$ stays nearly fixed throughout training—the limiting kernel in this case is referred to as the neural tangent kernel \citep{arora2019exact,jacot2018neural,ren2024learning}. Additionally, \citet{ren2024learning} also validated a relaxed assumption for LLM fine-tuning: the relative influence of learning $\textcolor{red}{x_u}$ on other inputs $\textcolor{blue}{x_v}$ remains roughly stable over training. Besides, the optimization steps during post-training of MLLMs in our paper are very less compared to the steps used in pre-training. So, the relative influence between $\textcolor{red}{x_u}$ and $\textcolor{blue}{x_v}$ during the post-training remains similar to the influence during pre-training is a reasonable hypothesis.

Next, we prove the symmetry theorem of learning dynamics:
\SymmetricLearningDynamics*
\begin{proof}
    Following Theorem~\ref{theo:decomp}, we have
    \begin{align}\nonumber
                \underbrace{\Delta\log\pi^t(\textcolor{blue}{x_v})|_{\textcolor{red}{x_u}}}_{1 \times 1} &= \underbrace{\big(\Delta\log\pi^t(\textcolor{blue}{x_v})|_{\textcolor{red}{x_u}}\big)\tp}_{1 \times 1} \\
                \nonumber&=\big\{\eta
      \underbrace{
        \nabla_{\vz}\log\pi_{\theta^t}(\textcolor{blue}{x_v})
    }_{1 \times V}
    \big[\underbrace{
\nabla_{\theta}\vz(\textcolor{blue}{x_v})|_{\theta^t}
    }_{V \times d}
    \underbrace{
        \big(\nabla_{\theta}\vz(\textcolor{red}{x_u})|_{\theta^t}\big)\tp
    }_{d \times V}
    \big]
    \underbrace{
        \big(\nabla_{\vz} \log \pi_{\theta^t} (\textcolor{red}{x_u})
    \big)\tp}_{V \times 1}+\bigO(\eta^2)\big\}\tp\\
    \nonumber&=\eta\underbrace{
        \nabla_{\vz}\log\pi_{\theta^t}(\textcolor{red}{x_u})
    }_{1 \times V}
    \big[\underbrace{
\nabla_{\theta}\vz(\textcolor{red}{x_u})|_{\theta^t}
    }_{V \times d}
    \underbrace{
        \big(\nabla_{\theta}\vz(\textcolor{blue}{x_v})|_{\theta^t}\big)\tp
    }_{d \times V}
    \big]
    \underbrace{
        \big(\nabla_{\vz} \log \pi_{\theta^t} (\textcolor{blue}{x_v})
    \big)\tp}_{V \times 1} + \bigO(\eta^2)\\
    \nonumber&=\Delta\log\pi^t(\textcolor{red}{x_u})|_{\textcolor{blue}{x_v}}-\bigO(\eta^2)+\bigO(\eta^2)\\
    &=\Delta\log\pi^t(\textcolor{red}{x_u})|_{\textcolor{blue}{x_v}}+\bigO(\eta^2).\qedhere
    \end{align}
\end{proof}

This theorem points out that the influence of training on $\textcolor{red}{x_u}$ over another example $\textcolor{blue}{x_v}$ is almost similar to the influence of $\textcolor{blue}{x_v}$ on $\textcolor{red}{x_u}$.

\section{More Results of Jigsaw Puzzles}\label{appendix:qualitative}

\textcolor{black}{
\textbf{Jigsaw Dataset Construction Details.}
We construct the $3\times3$ jigsaw dataset upon MS COCO images with the preprocessing pipeline in Algorithm~\ref{alg:jigsaw-construction}. For each image $I$, we obtain its original size $(H, W)$ and compute the nearest resolution $(H', W')$ such that both $H'$ and $W'$ are divisible by $3$, applying bicubic resizing if needed to obtain $I'$. We then partition $I'$ into a $3\times3$ grid with tile size $(h_{\text{tile}}, w_{\text{tile}}) = (H'/3, W'/3)$ and assign row-major indices $k \in \{0,\dots,8\}$. A \textbf{uniform random permutation} $\pi$ of $\{0,\dots,8\}$ is used to get the shuffled indices. At training time, the model receives the shuffled tiles and outputs the canonical top-left–to–bottom-right indices.
}

\begin{algorithm}[t]
\small
\caption{\textcolor{black}{Construction Process of the $3\times3$ Jigsaw Puzzle Dataset}}
\label{alg:jigsaw-construction}
\KwIn{Image dataset $\mathcal{D}$ (e.g., COCO-2014); grid size $m = n = 3$}
\KwOut{Tiles and metadata file}

\ForEach{$I \in \mathcal{D}$}{
    $(H, W) \leftarrow \mathrm{size}(I)$\;
    $H' \leftarrow \lceil H / m \rceil \times m$\;
    $W' \leftarrow \lceil W / n \rceil \times n$\;

    \eIf{$(H', W') \neq (H, W)$}{
        $I' \leftarrow \mathrm{bicubic\_resize}(I, H', W')$\;
    }{
        $I' \leftarrow I$\;
    }

    $(h_{\text{tile}}, w_{\text{tile}}) \leftarrow (H'/m, W'/n)$\;

    \tcp{Row-major slicing into $m\times n$ tiles}
    \For{$r \leftarrow 0$ \KwTo $m-1$}{
        \For{$c \leftarrow 0$ \KwTo $n-1$}{
            $k \leftarrow r \times n + c$\;
            \tcp{Crop Tile $k$ from Image}
            $T_k \leftarrow \mathrm{crop}(I', r, c, h_{\text{tile}}, w_{\text{tile}})$\;
            $\mathrm{save\_image}(T_k)$\;
        }
    }

    \tcp{Shuffle tiles}
    $\pi \leftarrow \mathrm{uniform\_random\_permutation}(\{0,\dots,m\times n -1\})$\;

        

    $\mathrm{save\_metadata}(I,\ H,\ W,\ H',\ W',\ \pi)$\;
}
\end{algorithm}

\textcolor{black}{
\textbf{Training Cost.}
 Table~\ref{tab:training-cost} summarizes the total GPU-hours (Number of GPU $\times$ Training Hours) for the main experiment configurations of Jigsaw Puzzles and Math Reasoning. 
 The largest configuration (Qwen2.5-VL-7B (jigsaw) RFT) requires about 2200 GPU-hours, while the SFT is two orders of magnitude cheaper. 
}

\begin{table}[t]
  \centering
  \small
  \caption{\textcolor{black}{Total training cost (in GPU-hours) for different model sizes and training recipes for the jigsaw puzzles (Qwen2.5-VL-3B/7B) and math reasoning (Qwen2.5-3B/7B). 
  }}
  \label{tab:training-cost}
  \begin{adjustbox}{width=\textwidth}
  \begin{tabular}{lcccc}
    \toprule
    Method & Qwen2.5-VL-3B (jigsaw) & Qwen2.5-VL-7B (jigsaw) & Qwen2.5-3B (math) & Qwen2.5-7B (math) \\
    \midrule
    RFT                     & 710       & 2200      & 72       & 96       \\
    SFT-Non-Rea              & 2.3       & 4         & 0.77     & 1.3     \\
    SFT-Rea-4o-Rollout       & 6.4       & 11.5      & 1.3      & 2.3     \\
    SFT-Rea-GRPO-Rollout     & 5.1   & 8.3  & 1.3  & 2.5  \\
    \bottomrule
  \end{tabular}
  \end{adjustbox}
\end{table}

\textbf{Jigsaw Puzzles with Large Learning Rate.}
To better illustrate how different training corpora impact forgetting, we increase the learning rate of SFT to $2 \times 10^{-5}$ to amplify the effect of forgetting. As shown in Tab.~\ref{tab:2e5sft}, finetuning on Rea-GRPO-Rollout not only masters the novel task jigsaw puzzles better and faster, but also preserves more prior knowledge than Rea-4o-Rollout. Specifically, fine-tuning on Rea-4o-Rollout causes severe forgetting after just 300 steps under this larger learning rate, \emph{e.g.}, accuracy on RefCOCO\textsubscript{val} drops from 88.8 to 0.16 on Qwen2.5-VL-3B, GQA drops from 60.38 to 42.74 on Qwen2.5-VL-7B.
In addition, as training progresses, SFT-Rea-GRPO-Rollout shows slight improvements on some benchmarks, while SFT-Rea-4o-Rollout exhibits a consistent decline on previous benchmarks.

\begin{table*}\centering
\small
\caption{Performance of various models on jigsaw puzzles, grounding, document QA, and general VQA benchmarks after SFT with learning rate $2\times10^{-5}$.}
\label{tab:2e5sft}
\begin{adjustbox}{width=\textwidth}
\begin{tabular}{l c c c c c c c c}
\toprule
\multirow{2}{*}{\textbf{Model}} & \multirow{2}{*}{\makecell[c]{\textbf{Training}\\\textbf{Steps}}} &
\multicolumn{1}{c}{\textbf{Jigsaw-test}} & \multicolumn{1}{c}{\textbf{Grounding}} &
\multicolumn{1}{c}{\textbf{Document \& OCR}} & \multicolumn{3}{c}{\textbf{General VQA}} &
\multicolumn{1}{c}{\textbf{Hallucination}}\\
\cmidrule(lr){3-3}\cmidrule(lr){4-4}\cmidrule(lr){5-5}\cmidrule(lr){6-8}\cmidrule(lr){9-9}
& & 3$\times$3 puzzles & RefCOCO\textsubscript{val} & DocVQA\textsubscript{test} & MME\textsubscript{sum} & MMStar & GQA & POPE\\
\midrule
\textbf{3B Base} & --   & 0  & 88.8 & 92.8 & 2140 & 56.2 & 60.1 & 86.9\\
\textbf{3B Rea-4o-Rollout} & 100 & 7 \up{7} & 0.1 \bdown{\textbf{88.7}} & 90.2 \down{2.6} & 2040 \down{100} & 26.2 \bdown{\textbf{30.0}} & 53.8 \down{6.4} & 82.3 \bdown{\textbf{4.5}}\\
\textbf{3B Rea-GRPO-Rollout} & 100 & 23 \up{23} & 81.1 \down{7.7} & 88.3 \down{4.5} & 2014 \bdown{\textbf{126}} & 44.2 \down{12.0} & 47.1 \bdown{\textbf{13.0}} & 86.5 \down{0.4}\\
\textbf{3B Rea-4o-Rollout} & 300 & 30 \up{30} & 0.2 \bdown{\textbf{88.6}} & 89.5 \down{3.3} & 1975 \bdown{\textbf{165}} & 25.0 \bdown{\textbf{31.2}} & 50.3 \down{9.8} & 79.0 \bdown{\textbf{7.9}}\\
\textbf{3B Rea-GRPO-Rollout} & 300 & 64 \up{64} & 80.5 \down{8.3} & 88.4 \down{4.4} & 2039 \down{101} & 42.6 \down{13.6} & 49.7 \down{10.4} & 87.4 \up{0.5}\\
\midrule
\textbf{7B Base} & --   & 0  & 90.0 & 94.4 & 2333 & 62.8 & 60.4 & 86.2\\
\textbf{7B Rea-4o-Rollout} & 100 & 29 \up{29} & 67.0 \bdown{\textbf{23.1}} & 93.1 \down{1.3} & 2175 \down{159} & 55.9 \down{6.9} & 46.0 \bdown{\textbf{14.4}} & 78.9 \bdown{\textbf{7.3}}\\
\textbf{7B Rea-GRPO-Rollout} & 100 & 58 \up{58} & 68.8 \bdown{\textbf{21.2}} & 93.1 \down{1.4} & 2078 \bdown{\textbf{256}} & 55.4 \down{7.4} & 54.6 \down{5.8} & 83.5 \down{2.7} \\
\textbf{7B Rea-4o-Rollout} & 300 & 65 \up{65} & 62.0 \bdown{\textbf{28.0}} & 92.9 \down{1.6} & 2062 \bdown{\textbf{272}} & 52.5 \bdown{\textbf{10.3}} & 42.7 \bdown{\textbf{17.6}} & 74.5 \bdown{\textbf{11.7}}\\
\textbf{7B Rea-GRPO-Rollout} & 300 & 74 \up{74} & 70.2 \down{19.9} & 93.1 \down{1.3} & 2088 \bdown{\textbf{246}} & 53.5 \bdown{\textbf{9.3}} & 53.9 \down{6.4} & 84.0 \down{2.1}\\
\bottomrule
\end{tabular}
\end{adjustbox}
\end{table*}

\textbf{Perplexity on Prior knowledge during SFT.} We further examine how the perplexity of sentences representing prior knowledge changes during SFT. As shown in Fig.~\ref{fig:rec-ppl}, with SFT-Rea-GRPO-Rollout, perplexity decreases steadily on the 3B model and stabilizes at a low level on the 7B model (since Rea-GRPO-Rollout data have higher perplexity on 7B than on 3B). In contrast, under SFT-Rea-4o-Rollout, perplexity keeps increasing, highlighting the importance of low-perplexity training data for preserving prior knowledge.

\begin{figure}
  \centering
  \includegraphics[width=\linewidth]{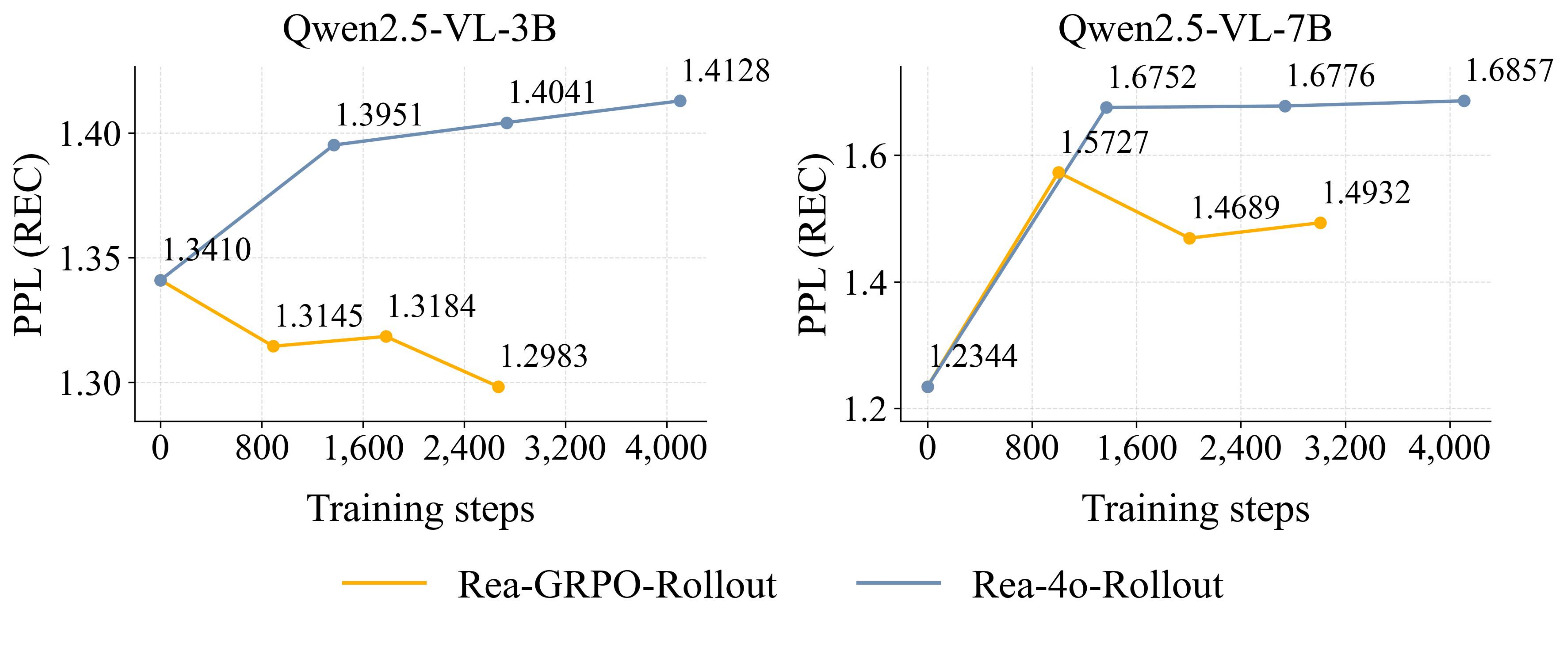}
  \caption{Perplexity versus SFT training steps for Grounding. We collect rollouts from the base model with question from Grounding dataset as our prior knowledge.}
  \label{fig:rec-ppl}
\end{figure}

\textbf{GRPO with Periodic Restart.}
We observe that training with GRPO for too many epochs on a novel task can lead to instability. Periodic restarts are essential for maintaining stable training. As shown in Fig.~\ref{fig:pair}, prolonged training eventually causes the KL divergence loss to explode, consistent with findings from \citet{ProRL}. To address this, we reset both the optimizer state and the reference policy every 2 epochs. Accordingly, when rule-based constraints or well-shaped reward signals are available, a prolonged exploration stage helps the model generate higher-quality trajectories. This allows it to acquire new skills that are not present in the base model, leading to measurable improvements on the target tasks.

\begin{figure}
  \centering
  \begin{subfigure}{0.49\textwidth}
    \centering
    \includegraphics[width=\linewidth]{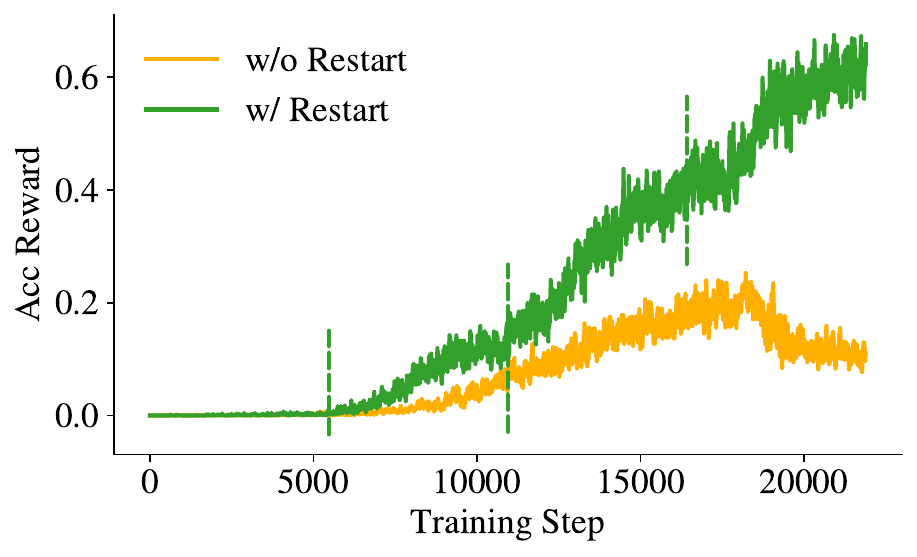}
  \end{subfigure}
  \hfill
  \begin{subfigure}{0.49\textwidth}
    \centering
    \includegraphics[width=\linewidth]{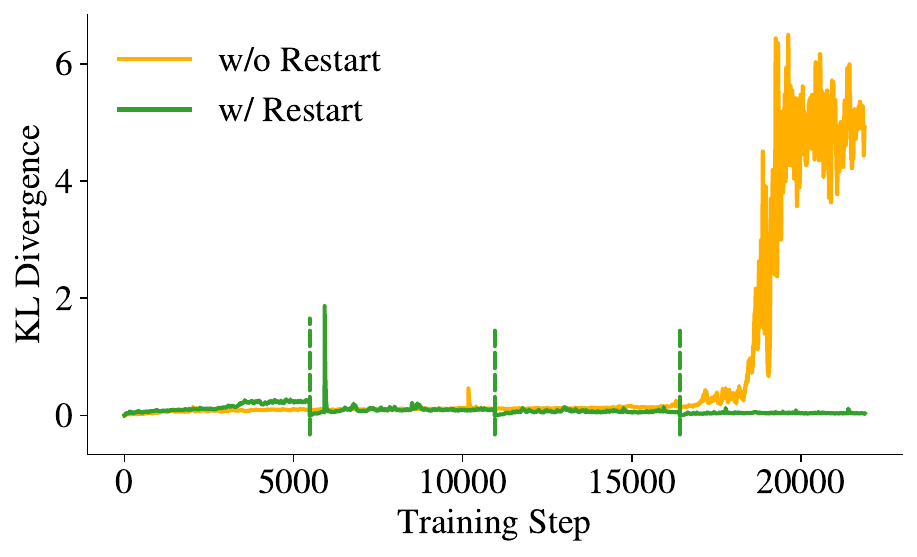}
  \end{subfigure}
  \caption{\textbf{Left:} Accuracy reward across training steps. \textbf{Right:} KL divergence across training steps. The dotted line indicates the point we restart the training of GRPO with newly initialized model.}
  \label{fig:pair}
\end{figure}

\textbf{Answer-Only Perplexity with Reasoning Trajectories.}
For each question, we further compute token-level PPL only on the final answers (i.e., the \textless answer\textgreater […]\textless /answer\textgreater part) across three datasets. As shown in Fig.~\ref{fig:scatter}, Non-Reasoning data forms a higher, more dispersed band, whereas Rea-GRPO-Rollout and Rea-4o-Rollout data cluster into a lower band across nearly all 100 questions, indicating reasoning trajectories help systematically reducing the uncertainty in decoding answers. 
Both reasoning dataset exhibit tighter vertical spread than Non-Rea data, suggesting not only lower average PPL but also smaller variance across questions. The observation on PPL across three datasets supports the claim that introducing thinking mitigates conflict from the novel jigsaw objective.

\begin{figure}
  \centering
  \includegraphics[width=\linewidth]{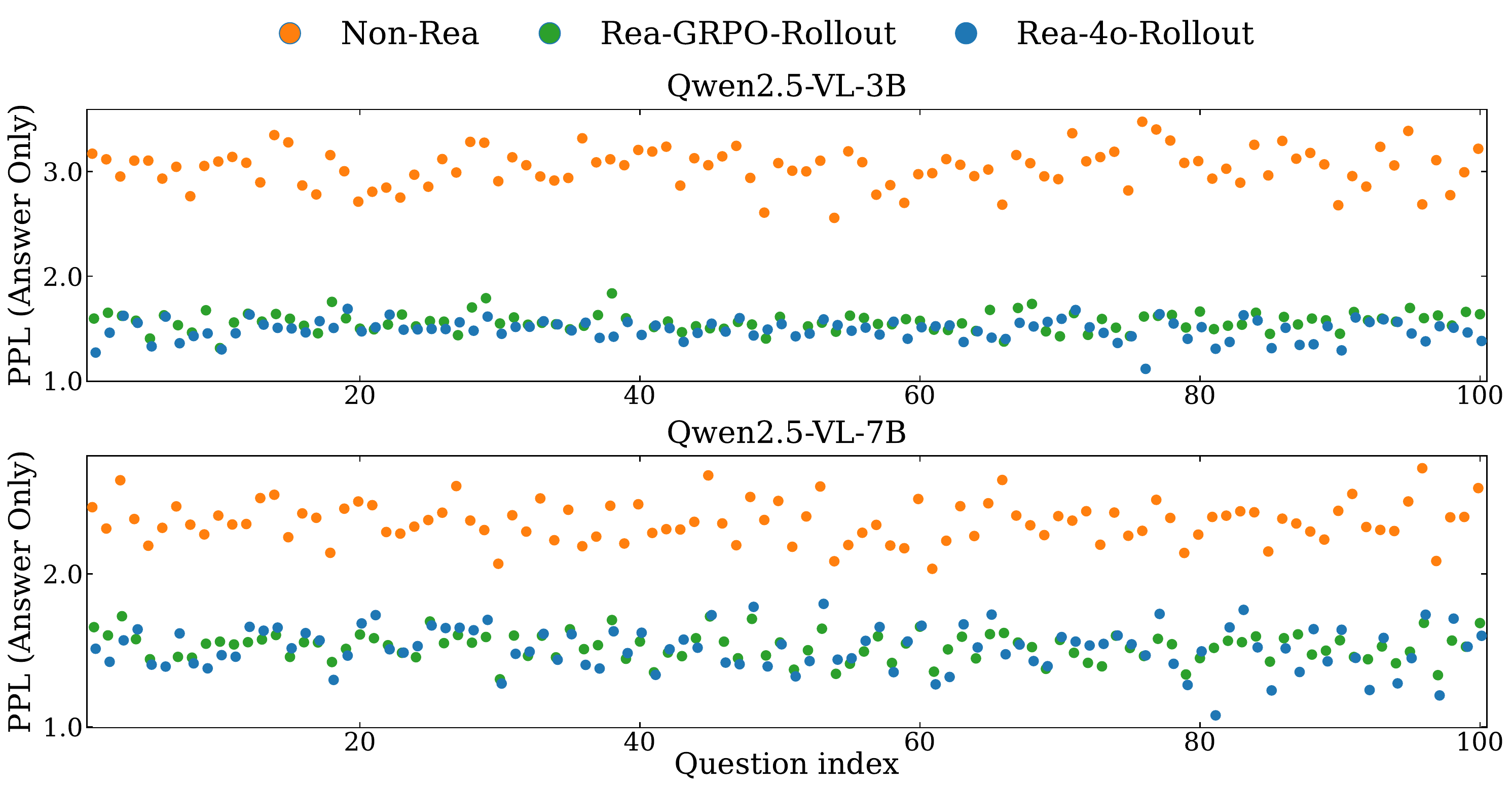}
  \caption{Answer-only perplexity of the numeric sequence (i.e., the \textless answer\textgreater […]\textless /answer\textgreater) for 100 randomly sampled jigsaw items across three datasets. In the Non-Reasoning setting, we insert an empty thinking tag (\textless think\textgreater\textless /think\textgreater) immediately before the answer.}
  \label{fig:scatter}
\end{figure}

\textbf{\textcolor{black}{SFT with self-generated CoTs.}}
\textcolor{black}{As shown in Tab.~\ref{tab:self_generated_cot_sft}, using only one epoch of RFT (5{,}472 steps) to self-generate CoTs and then running SFT (SFT-Rea-Self-Generated) already yields strong jigsaw performance and largely preserves old-task scores, while avoiding the severe degradation observed in SFT-Rea-4o-Rollout.}

\textbf{Qualitative jigsaw reconstructions.} 
Fig.~\ref{jigsaw-qualitative} showcases representative test instances solved by the RFT-trained model. The predictions display strong global coherence and semantical consistency, indicating that the model finds globally plausible layouts successfully. 

\begin{figure}
  \centering
  \includegraphics[width=\linewidth]{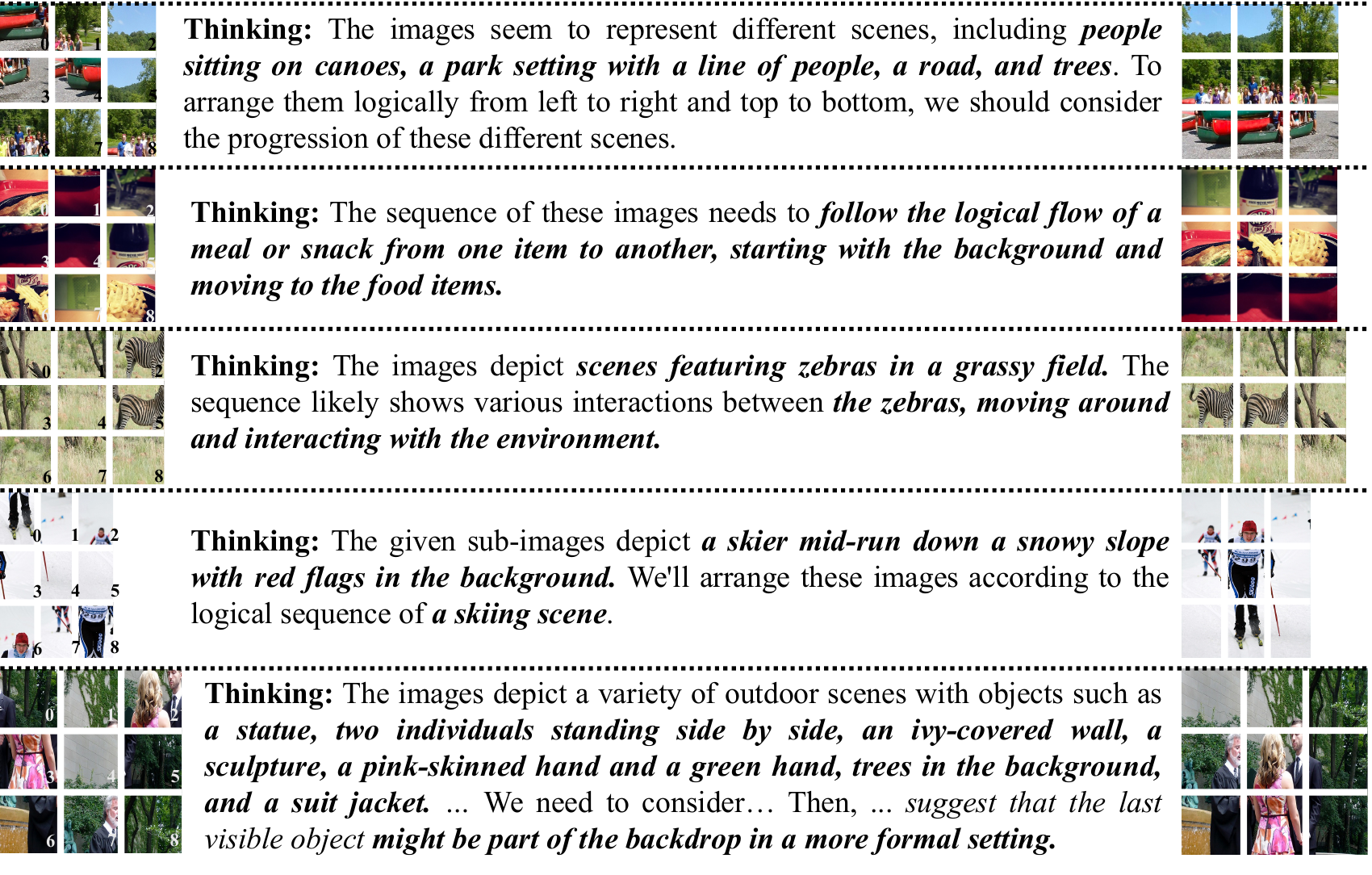}
  \caption{Qualitative Results on jigsaw puzzles (test) after training with RFT.}
  \label{jigsaw-qualitative}
\end{figure}

\textbf{GPT-4o Prompt and Response.} Fig.~\ref{fig:4o_prompt} shows the prompt we use to elicit GPT-4o ``thinking'' for the $3\times3$ jigsaw task and a representative response.

\begin{figure*}
\centering
\begin{minipage}{0.9\textwidth}
\begin{promptbox}
\small\ttfamily
You are given nine sub-images labeled 0-8 with encodings (\\
Sub-image 0: [\textit{image}]\\
Sub-image 1: [\textit{image}]\\
Sub-image 2: [\textit{image}]\\
Sub-image 3: [\textit{image}]\\
Sub-image 4: [\textit{image}]\\
Sub-image 5: [\textit{image}]\\
Sub-image 6: [\textit{image}]\\
Sub-image 7: [\textit{image}]\\
Sub-image 8: [\textit{image}]) and an optional weak reference (Reference image (total): [\textit{global\_image}]). Write one concise paragraph (no more than 150 words) with planning notes to help a human consider how these pieces might fit into a 3×3 grid. Focus only on visual cues such as broad scene features (e.g., color bands, horizon-like divisions, sky/ground contrasts), edge continuities (lines, textures, colors that could extend between tiles), and plausible adjacency candidates with short justifications using cautious language (e.g., "likely," "possible," "candidate"). Do not output the final grid, coordinates, or an ordered list of indices. Do not reveal or restate any hidden or original arrangement ([\textit{GROUND\_TRUTH\_ORDER}]) or provide descriptive details of any reference image as a whole. Treat the optional reference only as a weak hint and prioritize direct, piece-to-piece edge analysis. The goal is to provide helpful assembly guidance without committing to a definitive solution. Keep your output to a single paragraph under 150 words without final ordered lists.
\end{promptbox}

\begin{respbox}
To assemble the 3×3 grid, consider visual elements such as the wooden surface, objects on top, and edge alignments. Sub-image 0 features a yellow circular edge that could possibly align with Sub-image 8, given their similar yellow features and adjacent object edges. Sub-image 3, containing onions and chili peppers, seems to share continuity with Sub-image 1 based on onion textures and pepper placements. Sub-image 7, with its mix of objects, could potentially sit near Sub-image 2 due to similarities in scattered textures. The knife-like object in Sub-image 4 might extend toward Sub-image 6, as both show dark, linear edges plausible for alignment. Sub-image 5 has minimal features but might adjoin Sub-image 4 or Sub-image 8, as their backgrounds transition smoothly. Evaluate adjacency by matching subtle overlaps between textures and objects, focusing on edges and angles that continue logically across tiles. Prioritize edge features over object-specific assumptions.
\end{respbox}
\end{minipage}

\caption{Details of employing GPT-4o to generate reasoning trajectories for jigsaw puzzles.}
\label{fig:4o_prompt}
\end{figure*}



\begin{table*}\centering
\small
\caption{\textcolor{black}{Performance of various post-trained models on jigsaw puzzles, grounding, document QA, and general VQA benchmarks.}}
\label{tab:self_generated_cot_sft}
\begin{adjustbox}{width=\textwidth}
\begin{tabular}{l c c c c c c c c c}
\toprule
\multirow{2}{*}{\textbf{Model}}& \multirow{2}{*}{\makecell[c]{\textbf{RFT}\\\textbf{Steps}}} & \multirow{2}{*}{\makecell[c]{\textbf{SFT}\\\textbf{Steps}}} &
\multicolumn{1}{c}{\textbf{Jigsaw-test}} & \multicolumn{1}{c}{\textbf{Grounding}} &
\multicolumn{1}{c}{\textbf{Document \& OCR}} & \multicolumn{3}{c}{\textbf{General VQA}} &
\multicolumn{1}{c}{\textbf{Hallucination}}\\
\cmidrule(lr){4-4}\cmidrule(lr){5-5}\cmidrule(lr){6-6}\cmidrule(lr){7-9}\cmidrule(lr){10-10}
& & & 3$\times$3 puzzles & RefCOCO\textsubscript{val} & DocVQA\textsubscript{test} & MME\textsubscript{sum} & MMStar & GQA & POPE\\
\midrule
\textbf{Qwen2.5-VL-3B} &&&&&&&&&\\
\textbf{Base} & -- & --  & 0  & 88.8 & 92.8 & 2140 & 56.2 & 60.1 & 86.9\\
\textbf{RFT} & 27,360 & 0 & 66.0 \up{66} & 88.4 \down{0.4} & 91.5 \down{1.3} & 2137 \down{3} & 55.8 \down{0.5} & 59.5 \down{0.6} & 86.5 \down{0.3}\\
\textbf{SFT-Rea-4o-Rollout} & 0 & 4,100 & 70.0 \up{70} & 74.2 \bdown{\textbf{14.6}} & 90.3 \down{2.5} & 1478 \bdown{\textbf{662}} & 51.7 \down{4.5} & 50.0 \bdown{\textbf{10.1}} & 69.4 \bdown{\textbf{17.5}}\\
\textbf{SFT-Rea-GRPO-Rollout} & 27,360 & 2,670 & 70.0 \up{70} & 84.6 \down{4.2} & 89.8 \down{3.1} & 2132 \down{8} & 52.2 \down{4.0} & 54.0 \down{6.1} & 85.4 \down{1.4}\\
\textbf{SFT-Rea-Self-Generated} & 5,472 & 4,100 & 84.0 \up{84} & 84.5 \down{4.3} & 90.3 \down{2.5} & 2142 \up{2} & 52.4 \down{3.8} & 54.7 \down{5.4} & 88.2 \up{1.3} \\
\midrule
\textbf{Qwen2.5-VL-3B} &&&&&&&&\\
\textbf{Base} & -- & --  & 0.0  & 90.0 & 94.4 & 2333 & 62.8 & 60.4 & 86.2\\
\textbf{RFT} & 27,360 & 0 & 75.0 \up{75} & 89.4 \down{0.6} & 94.4 \up{0.0} & 2325 \down{8} & 64.4 \down{1.7} & 60.3 \down{0.1} & 86.0 \down{0.2}\\
\textbf{SFT-Rea-4o-Rollout} & 0 & 4,100 & 78 \up{78} & 52.5 \bdown{\textbf{37.5}} & 92.1 \down{2.3} & 2084 \down{249} & 59.1 \down{3.7} & 53.5 \bdown{\textbf{6.9}} & 74.1 \bdown{\textbf{12.1}}\\
\textbf{SFT-Rea-GRPO-Rollout} & 27,360 & 3,000 & 81.0 \up{81} & 81.4 \down{8.6} & 93.5 \down{0.9} & 2207 \down{126} & 60.4 \down{2.4} & 57.0 \down{3.3} & 83.1 \down{3.1} \\
\textbf{SFT-Rea-Self-Generated} & 5,472 & 4,100 & 79.0 \up{79} & 86.0 \down{4.0} & 93.8 \down{0.6} & 2256 \down{77} & 60.6 \down{2.2} & 56.7 \down{3.6} & 84.9 \down{1.3} \\
\bottomrule
\end{tabular}
\end{adjustbox}
\end{table*}

\section{\textcolor{black}{More Details of LLM on Math Reasoning}}\label{appendix:math_reasoning}

\textcolor{black}{
\textbf{Math Reasoning Dataset.}
We use the curated math corpus released by Open-Reasoner-Zero~\citep{ORZ} as our large-scale reasoning-oriented training data, and randomly split it into $90\%$ training and $10\%$ held-out test data.
We refer to this held-out split as \textit{ORZ Test}, which serves as a new target task for post-training.
Each example is a competition-style math problem paired with a verifiable final answer, without any visual input.
}

\textcolor{black}{
\textbf{LLMs and Evaluation.}
For math experiments, we use Qwen2.5-3B-Instruct~\citep{qwen2.5} and Qwen2.5-7B-Instruct as base LLMs. We treat ORZ Test as a new target task and report answer accuracy on it, and use GSM8K~\citep{gsm8k}, MATH-500~\citep{MATH-500}, and IFEval~\citep{IFEval} as prior knowledge to monitor retention of prior math and instruction-following abilities, with their standard accuracy metrics.
}




\textcolor{black}{
\textbf{Pareto Frontier of SFT-Rea-GRPO-Rollout and SFT-Rea-4o-Rollout.}
We further sweep the learning rate over $\{1\times10^{-6}, 5\times10^{-6}, 1\times10^{-5}, 2\times10^{-5}\}$ and, for each setting, plot the Pareto-optimal frontier between performance on the new ORZ task and the performance on old tasks (GSM8K, MATH-500, and IFEval) in Fig.~\ref{fig:llm-parato}. Across all learning rates and both 3B/7B scales, SFT training on Rea-GRPO-Rollout consistently achieves a strictly better Pareto frontier than on Rea-4o-Rollout, yielding either higher ORZ accuracy under a similar level of forgetting, or better retention of prior knowledge at comparable ORZ performance.
}

\begin{figure}
  \centering
  \includegraphics[width=\linewidth]{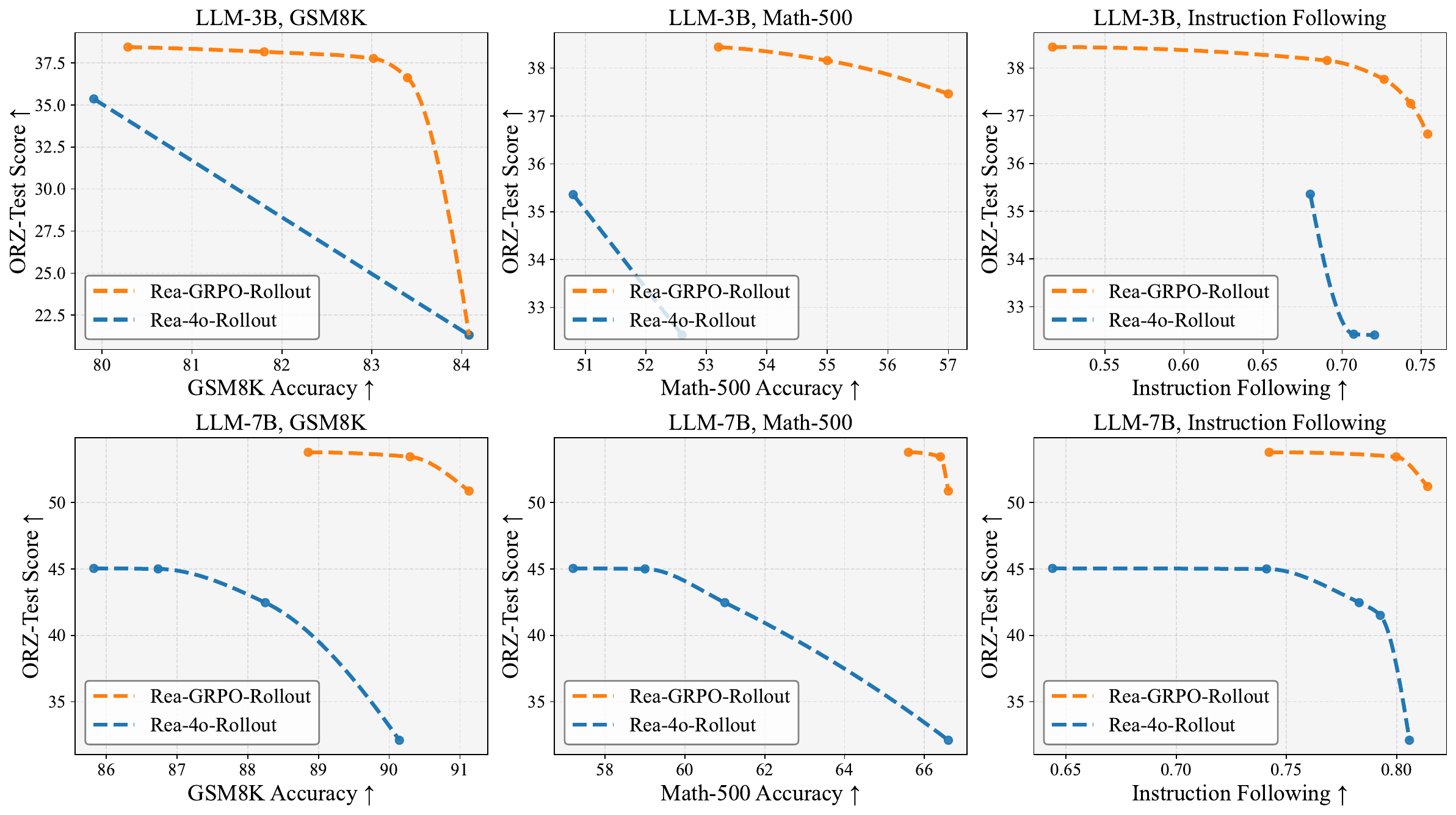}
  \caption{\textcolor{black}{Pareto front curves on the ORZ-test and previous math tasks for models fine-tuned on Rea-4o-Rollout and Rea-GRPO-Rollout data.}}
  \label{fig:llm-parato}
\end{figure}

\textcolor{black}{
\textbf{Perplexity on Prior knowledge during SFT.}
Fig.~\ref{fig:math-ppl} also shows that training on Rea-GRPO-Rollout maintains the perplexity of old math corpora much more stably than Rea-4o-Rollout, these results further corroborate our low-perplexity training hypothesis in Sec.~\ref{sec:data} and demonstrate that the advantages of Rea-GRPO-Rollout are robust under different optimization hyperparameters.}

\begin{figure}
  \centering
  \includegraphics[width=\linewidth]{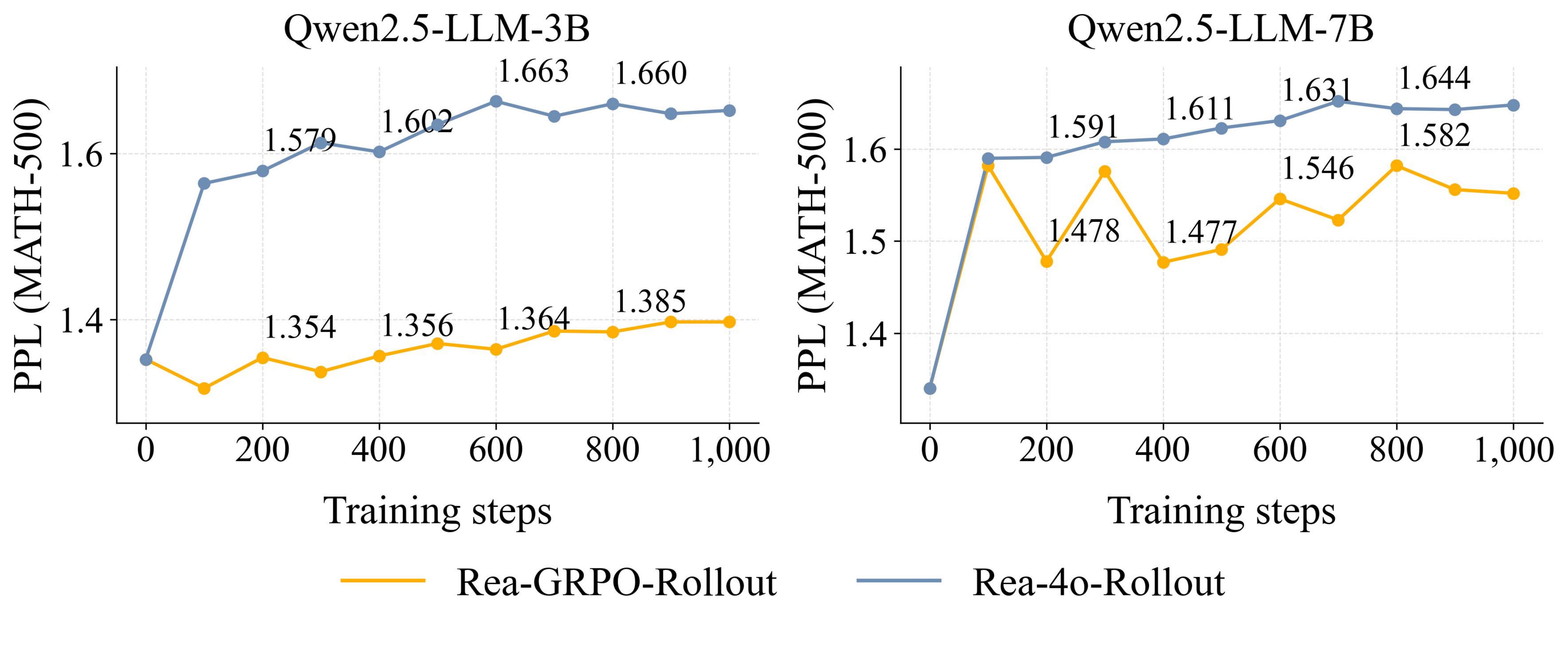}
  \caption{\textcolor{black}{Perplexity versus SFT training steps for MATH-500. We collect rollouts from the base model with question from MATH-500 dataset as our prior knowledge.}}
  \label{fig:math-ppl}
\end{figure}

\textcolor{black}{
\textbf{GRPO Training Recipe.} In our initial experiments, we followed the default settings of the HuggingFace/trl framework, where \textit{num\_iterations} is set to 1 (i.e., the GRPO parameter $\mu$). This means that, by default, $\pi_{\theta_{\text{old}}}(\cdot)$ and $\pi_{\theta}(\cdot)$ are identical during training.
In Tab.~\ref{tab:llm_math_training_recipe}, we report RFT results on math reasoning with different GRPO training recipes. The experiments show that using the standard GRPO recipe and using our recipe leads to only minimal differences in model performance on new or old tasks.
}

\begin{table}
  \centering
  \small
  \caption{\textcolor{black}{Performance comparison of different GRPO training recipe.}}
  \label{tab:llm_math_training_recipe}
  \begin{adjustbox}{width=\textwidth}
  \begin{tabular}{l cccc cccc}
    \toprule
    & \multicolumn{4}{c}{\textbf{Qwen2.5-3B-Instruct}}
    & \multicolumn{4}{c}{\textbf{Qwen2.5-7B-Instruct}}\\
    \cmidrule(lr){2-5}\cmidrule(lr){6-9}
    & \textbf{Base}
    & \textbf{RFT ($\mu=1$)}
    & \textbf{RFT ($\mu=2$)}
    & \textbf{RFT ($\mu=4$)}
    & \textbf{Base}
    & \textbf{RFT ($\mu=1$)}
    & \textbf{RFT ($\mu=2$)}
    & \textbf{RFT ($\mu=4$)}\\
    \midrule
    \multicolumn{9}{c}{\textbf{\textit{Open-Reasoner-Zero (test) (New Task)}}} \\ \midrule
    ORZ Test
      & 21.32  & 35.00 & 35.70 & 34.10
      & 32.11 & 49.29 & 48.18 & 47.12 \\
    \midrule
    \multicolumn{9}{c}{\textbf{\textit{Math Reasoning (Old Tasks)}}} \\ \midrule
    GSM8k
      & 84.08 & 83.40 & 83.02 & 81.27
      & 90.14 & 90.22 & 90.29 & 89.69 \\
    Math-500
      & 42.40 & 55.20 & 55.40 & 52.80
      & 66.60 & 64.80 & 65.20 & 64.00 \\
    \midrule
    \multicolumn{9}{c}{\textbf{\textit{Instruction Following (Old Task)}}} \\ \midrule
    IFEval
      & 71.58 & 73.38 & 74.82 & 73.98
      & 80.58 & 80.46 & 81.89 & 80.46 \\
    \bottomrule
  \end{tabular}
  \end{adjustbox}
\end{table}

\textcolor{black}{\textbf{GPT-4o Math Prompt and Response.} Fig.~\ref{fig:4o_math_prompt} shows the prompt we use to elicit GPT-4o ``thinking'' for single-step math problems and a representative response.}

\section{\textcolor{black}{Mixture of Rea and Non-Rea Data for SFT}}

\textcolor{black}{We have experimented with mixing data of different styles for SFT. Specifically, we combine reasoning and non-reasoning data in equal proportion and apply a unified prompt template: non-reasoning samples produce an empty CoT followed by the answer, while reasoning samples output a CoT first and then the answer. We run this experiment on both the Jigsaw Puzzles and Math Reasoning datasets. The mixed-data fine-tuning method is denoted as \textbf{SFT-Mixture} in the Tab.~\ref{tab:mixture_sft_jigsaw} and Tab.~\ref{tab:mixture_sft_math}.
The results show that SFT-Mixture performs between SFT-Non-Rea and SFT-Rea-4o-Rollout, but remains far worse than SFT using model-generated rollout data (SFT-Rea-GRPO-Rollout), even though the latter uses only a single fixed reasoning format. This indicates that simply increasing the stylistic diversity of SFT data does not effectively mitigate catastrophic forgetting. The key is to obtain data that better matches the model’s own distribution.}

\begin{table*}\centering
\small
\caption{\textcolor{black}{Albation results of Rea and Non-Rea data mixture SFT on jigsaw puzzles.}}
\label{tab:mixture_sft_jigsaw}
\begin{adjustbox}{width=\textwidth}
\begin{tabular}{l c c c c c c c c}
\toprule
\multirow{2}{*}{\textbf{Model}} & \multirow{2}{*}{\makecell[c]{\textbf{Training}\\\textbf{Steps}}} &
\multicolumn{1}{c}{\textbf{Jigsaw-test}} & \multicolumn{1}{c}{\textbf{Grounding}} &
\multicolumn{1}{c}{\textbf{Document \& OCR}} & \multicolumn{3}{c}{\textbf{General VQA}} &
\multicolumn{1}{c}{\textbf{Hallucination}}\\
\cmidrule(lr){3-3}\cmidrule(lr){4-4}\cmidrule(lr){5-5}\cmidrule(lr){6-8}\cmidrule(lr){9-9}
& & 3$\times$3 puzzles & RefCOCO\textsubscript{val} & DocVQA\textsubscript{test} & MME\textsubscript{sum} & MMStar & GQA & POPE\\
\midrule
\textbf{Qwen2.5-VL-3B} &&&&&&&&\\
\textbf{Base} & --  & 0  & 88.8 & 92.8 & 2140 & 56.2 & 60.1 & 86.9\\
\textbf{SFT-Non-Rea} & 200 & 53.0 \up{53} & 6.1 \bdown{\textbf{82.8}} & 81.6 \bdown{\textbf{11.3}} & 1631 \bdown{\textbf{509}} & 49.2 \down{7.0} & 54.7 \down{5.4} & 85.9 \down{1.0}\\
\textbf{SFT-Rea-4o-Rollout} & 4,100 & 70.0 \up{70} & 74.2 \bdown{\textbf{14.6}} & 90.3 \down{2.5} & 1478 \bdown{\textbf{662}} & 51.7 \down{4.5} & 50.0 \bdown{\textbf{10.1}} & 69.4 \bdown{\textbf{17.5}}\\
\textbf{SFT-Rea-GRPO-Rollout} & 2,670 & 70.0 \up{70} & 84.6 \down{4.2} & 89.8 \down{3.1} & 2132 \down{8} & 52.2 \down{4.0} & 54.0 \down{6.1} & 85.4 \down{1.4}\\
\textbf{SFT-Mixture} & 1,367 & 70.0 \up{70} & 74.0 \bdown{\textbf{14.8}} & 88.6 \down{4.2} & 1557 \bdown{\textbf{583}} & 46.2 \bdown{\textbf{10.0}} & 46.8 \bdown{\textbf{13.3}} & 70.4 \bdown{\textbf{16.5}}\\
\midrule
\textbf{Qwen2.5-VL-7B} &&&&&&&\\
\textbf{Base} & --  & 0.0  & 90.0 & 94.4 & 2333 & 62.8 & 60.4 & 86.2\\
\textbf{SFT-Non-Rea} & 400 & 80.0 \up{80} & 32.9 \bdown{\textbf{57.2}} & 67.1 \bdown{\textbf{27.4}} & 479 \bdown{\textbf{1854}} & 0.0 \bdown{\textbf{62.8}} & 21.7 \bdown{\textbf{38.7}} & 16.3 \bdown{\textbf{69.9}}\\
\textbf{SFT-Rea-4o-Rollout} & 4,100 & 78.0 \up{78} & 52.5 \bdown{\textbf{37.5}} & 92.1 \down{2.3} & 2084 \down{249} & 59.1 \down{3.7} & 53.5 \bdown{\textbf{6.9}} & 74.1 \bdown{\textbf{12.1}}\\
\textbf{SFT-Rea-GRPO-Rollout} & 3,000 & 81.0 \up{81} & 81.4 \down{8.6} & 93.5 \down{0.9} & 2207 \down{126} & 60.4 \down{2.4} & 57.0 \down{3.3} & 83.1 \down{3.1} \\
\textbf{SFT-Mixture} & 1,367 & 84.0 \up{84} & 26.5 \bdown{\textbf{63.5}} & 93.2 \down{1.2} & 1992 \bdown{\textbf{341}} & 55.7 \bdown{\textbf{7.1}} & 51.4 \bdown{\textbf{9.0}} & 75.4 \bdown{\textbf{10.8}}\\
\bottomrule
\end{tabular}
\end{adjustbox}
\end{table*}

\begin{table*}\centering
\small
\caption{\textcolor{black}{Albation results of Rea and Non-Rea data mixture SFT on math reasoning.}}
\label{tab:mixture_sft_math}
\begin{adjustbox}{width=\textwidth}
\begin{tabular}{l c c c c c}
\toprule
\multirow{2}{*}{\textbf{Model}} & \multirow{2}{*}{\makecell[c]{\textbf{Training}\\\textbf{Steps}}} &
\multicolumn{1}{c}{\textbf{Open-Reasoner-Zero}} & \multicolumn{2}{c}{\textbf{Math Reasoning}} &
\multicolumn{1}{c}{\textbf{Instruction Following}}\\
\cmidrule(lr){3-3}\cmidrule(lr){4-5}\cmidrule(lr){6-6}
& & ORZ Test & GSM8k & Math-500 & IFEval\\
\midrule
\textbf{Qwen2.5-3B-Instruct} &&&&&\\
\textbf{Base} & --  & 21.32  & 84.08 & 42.4 & 71.58\\
\textbf{SFT-Non-Rea} & 1,600 & 23.40 \up{2.08} & 15.09 \bdown{\textbf{68.99}} & 19.4 \bdown{\textbf{23}} & 64.03 \bdown{\textbf{7.55}}\\
\textbf{SFT-Rea-4o-Rollout} & 2,140 & 35.36 \up{14.04} & 79.91 \bdown{\textbf{9.17}} & 50.8 \up{8.4} & 67.99 \down{3.59}\\
\textbf{SFT-Rea-GRPO-Rollout} & 2,140 & 37.76 \up{16.44} & 83.02 \down{1.06} & 54.4 \up{12.0} & 72.66 \up{1.08}\\
\textbf{SFT-Mixture} & 2,140 & 32.33 \up{11.01} & 77.86 \bdown{\textbf{6.22}} & 50.6 \up{8.2} & 72.78 \up{1.2}\\
\midrule
\textbf{Qwen2.5-7B-Instruct} &&&&\\
\textbf{Base} & --  & 32.11  & 90.14 & 66.6 & 80.58\\
\textbf{SFT-Non-Rea} & 1,600 & 30.32 \down{1.79} & 21.76 \bdown{\textbf{68.38}} & 26.4 \bdown{\textbf{40.2}} & 57.19 \bdown{\textbf{23.39}}\\
\textbf{SFT-Rea-4o-Rollout} & 2,140 & 45.04 \up{12.93} & 85.82 \bdown{\textbf{4.32}} & 57.2 \bdown{\textbf{9.4}} & 64.39 \bdown{\textbf{16.19}}\\
\textbf{SFT-Rea-GRPO-Rollout} & 2,140 & 53.44 \up{21.33} & 90.30 \up{0.16} & 66.4 \down{0.2} & 79.98 \down{0.6}\\
\textbf{SFT-Mixture} & 1,070 & 40.55 \up{8.44} & 72.71 \bdown{\textbf{17.43}} & 54.2 \bdown{\textbf{12.4}} & 73.26 \bdown{\textbf{7.32}}\\
\bottomrule
\end{tabular}
\end{adjustbox}
\end{table*}

\begin{figure*}
\centering
\begin{minipage}{0.9\textwidth}
\begin{promptmathbox}
\small\ttfamily
You will be given a single math problem and the correct final answer.\\[0.5em]

Problem:\\
\{PROBLEM\}\\[0.5em]

Correct answer:\\
\{ANSWER\}\\[0.5em]

Write one compact reasoning sketch (no more than 300 words) that shows a plausible path from the problem statement to that answer. Focus on the main ideas: pattern spotting, substitutions, identifying recurrences/monotonicity, modular reductions, or comparing forms until the result becomes forced. Use cautious language (``we can see,'' ``it suggests,'' ``this leads to'') and keep the tone technical. Do NOT say that the answer was given to you, do NOT restate the answer at the end, and do NOT list alternative answers. The goal is to provide hidden working notes that make the final answer unsurprising.
\end{promptmathbox}

\begin{respbox}
We consider the nested expression
\[
f(x) = \sqrt{1 + x \sqrt{1 + (x+1) \sqrt{1 + (x+2) \sqrt{1 + (x+3) \sqrt{\cdots}}}}}.
\]
This suggests defining a recursive form \(f(x) = \sqrt{1 + x f(x+1)}\), so that
\[
f(x)^2 = 1 + x f(x+1).
\]
We can test a simple polynomial candidate \(f(x) = x + 1\). Then
\[
f(x)^2 = (x+1)^2 = x^2 + 2x + 1,\quad
1 + x f(x+1) = 1 + x(x+2) = x^2 + 2x + 1.
\]
The expressions agree, which indicates that \(f(x) = x + 1\) is consistent with the recurrence for all positive integers \(x\). Since the nested radical is increasing and all terms are positive, this closed form is the stable solution of the recursion. Evaluating at \(x = 2008\) gives \(f(2008) = 2009\).
\end{respbox}
\end{minipage}

\caption{\textcolor{black}{Details of employing GPT-4o to generate hidden chain-of-thought reasoning trajectories for math problems.}}
\label{fig:4o_math_prompt}
\end{figure*}

\section{\textcolor{black}{LLM Experiments on Scientific Multiple-Choice QA}}\label{appendix:sci_mcq4}

\textcolor{black}{
\textbf{Scientific MCQ Dataset.}
We further investigate catastrophic forgetting on scientific multiple-choice questions using the Sci-MCQ4 subset from SciKnowEval~\citep{sciknoweval}.
From the original corpus, we randomly sample 8{,}500 examples and split them into $90\%$ training and $10\%$ held-out test data. 
Each instance is a four-choice science question together with its correct option, covering multi-level scientific knowledge such as physics, chemistry, and biology.
We refer to this held-out split as \textit{Sci-MCQ4 Test}, which serves as a target task for post-training.
}

\textcolor{black}{
\textbf{LLM and Evaluation.}
For scientific QA experiments, we reuse Qwen2.5-3B-Instruct as the base LLM.
We treat Sci-MCQ4 Test as the new target task and report answer accuracy on it, and we reuse GSM8K, MATH-500, and IFEval as old tasks with their standard accuracy metrics to monitor retention of prior math and instruction-following abilities. 
}

\textcolor{black}{
\textbf{Results.}
As summarized in Tab.~\ref{tab:llm_sci_results}, all post-training methods improve performance on the new Sci-MCQ4 task compared to the base model, 
with SFT-Rea-GRPO-Rollout obtaining the largest gain ($+6.6$ points). 
However, the methods exhibit different degrees of forgetting on old tasks:
SFT-Non-Rea suffers the most severe degradation on GSM8K ($-14.6$ points) and also a drop on IFEval, 
while reasoning-augmented SFT with external CoT (SFT-Rea-4o-Rollout) forgets less but still more than SFT-Rea-GRPO-Rollout.
The latter achieves a favorable trade-off, boosting Sci-MCQ4 accuracy while keeping GSM8K, MATH-500 and IFEval close to—or even slightly better than—the base model.
This again establishes a consistent hierarchy of forgetting severity, \emph{i.e.}, Non-Rea $>$ Rea-4o $>$ Rea-GRPO, now on a scientific QA benchmark. 
}

\textcolor{black}{
\textbf{Analysis.}
To probe the underlying mechanism, we recompute LBK between post-training samples and prior math knowledge during SFT on Sci-MCQ4.
As shown in Fig.~\ref{fig:ntk-mcq4}, Non-Rea data consistently exhibit the largest LBK values, 
whereas Rea-4o contains occasional high-LBK outliers and Rea-GRPO is concentrated in the low-LBK region, 
further supporting the learning-dynamics analysis in Sec.~\ref{sec:ntk-lbk}.
Moreover, Fig.~\ref{fig:steck-mcq4} plots the perplexity of Rea-GRPO-Rollout and Rea-4o-Rollout trajectories under the base LLM. 
Similar to our findings on math reasoning and multimodal jigsaw puzzles, 
Rea-4o-Rollout tends to occupy higher-perplexity regions while Rea-GRPO-Rollout stays closer to the low-perplexity band defined by base rollouts, 
providing additional evidence for the low-perplexity training hypothesis in Sec.~\ref{sec:data}.
}

\textcolor{black}{\textbf{GPT-4o ScienceQA Prompt and Response.} Fig.~\ref{fig:4o_mcq4_prompt} shows the prompt we use to elicit GPT-4o ``thinking'' for scientific multiple-choice problems and a representative response.}

\begin{table}
  \centering
  \tiny
  \setlength{\tabcolsep}{5pt}
  \caption{\textcolor{black}{Performance comparison across post-trained models of Qwen2.5-3B-Instruct on the scientific multiple-choice dataset \textbf{Sci-MCQ4}.
  Numbers in parentheses denote the change w.r.t.\ the base model.}}
  \label{tab:llm_sci_results}
  \vspace{-0.1in}
  \resizebox{\linewidth}{!}{
  \begin{tabular}{lccccc}
    \toprule
    & \textbf{Steps} & \textbf{Sci-MCQ4} & \textbf{GSM8K} & \textbf{MATH-500} & \textbf{IFEval} \\
    \midrule
    Base
      & --   & 65.1 & 84.1 & 42.4 & 71.6 \\
    RFT
      & 960  & 70.8 \up{5.7} & 82.9 \down{1.1} & 45.2 \up{2.8} & 71.8 \up{0.2} \\
    SFT-Non-Rea
      & 240  & 69.3 \up{4.2} & 69.4 \bdown{\textbf{14.6}} & 43.2 \up{0.8} & 70.3 \bdown{\textbf{1.3}} \\
    SFT-Rea-4o-Rollout
      & 214  & 67.0 \up{1.9} & 74.1 \bdown{\textbf{9.9}} & 40.2 \down{2.2} & 73.9 \up{2.3} \\
    SFT-Rea-GRPO-Rollout
      & 214  & 71.7 \up{6.6} & 81.3 \down{2.7} & 40.8 \down{1.6} & 74.5 \up{2.9} \\
    \bottomrule
  \end{tabular}}
\end{table}

\begin{figure}
  \centering
  \includegraphics[width=0.7\linewidth]{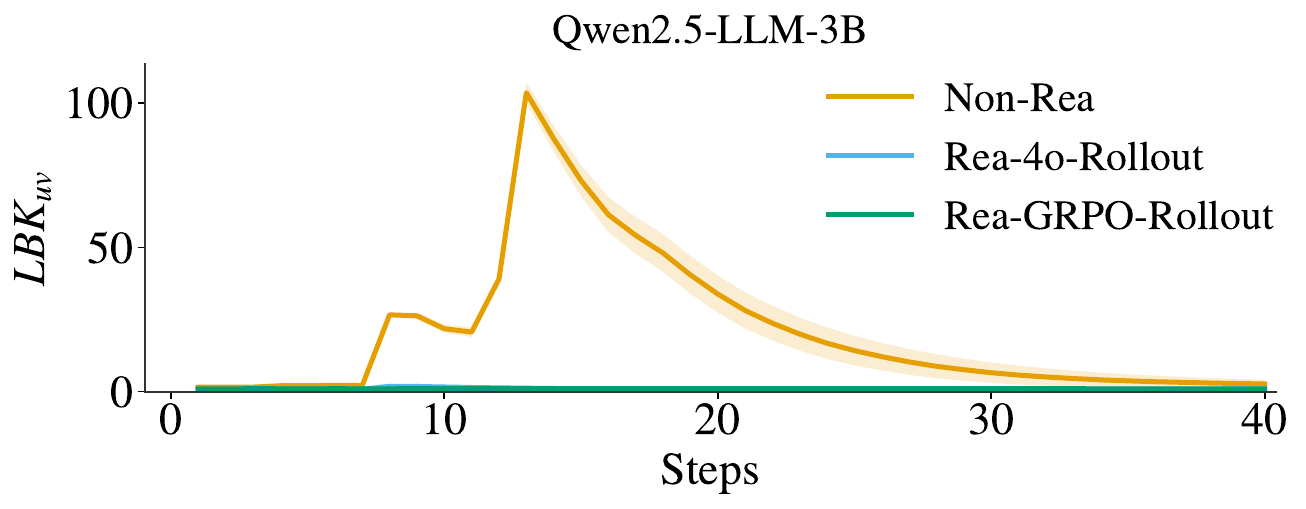}
  \caption{\textcolor{black}{Evolution of $\text{LBK}_{uv}$ during the SFT process with three different datasets on the Sci-MCQ4 scientific QA dataset.}}
  \vspace{-0.1in}
  \label{fig:ntk-mcq4}
\end{figure}

\begin{figure}
  \centering
  \includegraphics[width=0.98\linewidth]{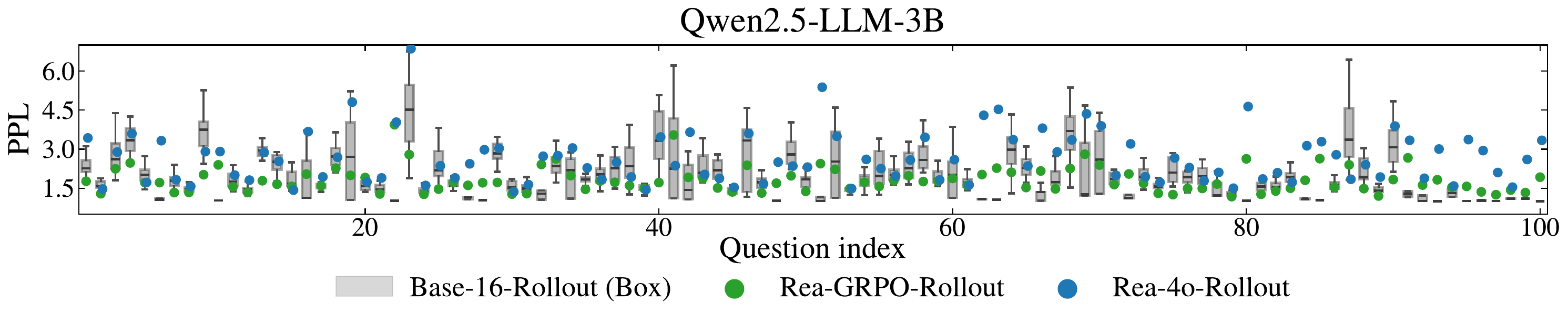}
  \vspace{-0.1in}
  \caption{\textcolor{black}{PPL of Rea-GRPO-Rollout and Rea-4o-Rollout trajectories on the Sci-MCQ4 dataset under the base LLM. 
  Base-16-Rollout (Box) denotes the PPL range estimated from 16 rollouts generated by the base model, serving as a reference.}}
  \label{fig:steck-mcq4}
\end{figure}

\begin{figure*}
\centering
\begin{minipage}{0.9\textwidth}
\begin{promptmcqbox}
\small\ttfamily
You will be given a single math problem and the correct final answer.\\[0.5em]

Problem:\\
\{PROBLEM\}\\[0.5em]

Correct answer:\\
\{ANSWER\}\\[0.5em]

Write one compact reasoning sketch (no more than 300 words) that shows a plausible path from the problem statement to that answer. Focus on the main ideas: pattern spotting, substitutions, identifying recurrences/monotonicity, modular reductions, or comparing forms until the result becomes forced. Use cautious language (``we can see,'' ``it suggests,'' ``this leads to'') and keep the tone technical. Do NOT say that the answer was given to you, do NOT restate the answer at the end, and do NOT list alternative answers. The goal is to provide hidden working notes that make the final answer unsurprising.
\end{promptmcqbox}

\begin{respbox}
The question focuses on the role of NADPH oxidase activation in cardiac dysfunction, specifically its primary function. NADPH oxidase is a key enzyme involved in generating reactive oxygen species (ROS), which are implicated in oxidative stress. Oxidative stress is a well-established contributor to pathological cardiac remodeling and dysfunction. Excessive ROS production can damage cellular components, leading to apoptosis (programmed cell death) and impairing diastolic function by disrupting myocardial relaxation and increasing stiffness.

The activation of NADPH oxidase suggests a pathological mechanism rather than a compensatory or beneficial one, as oxidative stress typically exacerbates cardiac injury rather than promoting hypertrophy or enhancing systolic function. While hypertrophy may occur as a secondary response to stress, the direct link between NADPH oxidase activation and apoptosis aligns with its role in facilitating cellular damage. This indicates that the enzyme’s activation is consistent with promoting cardiac apoptosis and diastolic dysfunction, which are hallmarks of oxidative stress-induced cardiac pathology.
\end{respbox}
\end{minipage}

\caption{\textcolor{black}{Details of employing GPT-4o to generate hidden chain-of-thought reasoning trajectories for Sci-MCQ4 problems.}}
\label{fig:4o_mcq4_prompt}
\end{figure*}

\end{document}